\newtheorem{theorem}{Theorem}
\newtheorem{lemma}{Lemma}
\newtheorem{corollary}{Corollary}
\lstdefinelanguage{L4}
{morekeywords={
      assert 
    , class  
    , decl   
    , defn   
    , extends
    , lexicon
    , fact   
    , rule   
    , derivable
    , let   
    , in    
    , not   
    , forall
    , exists
    , if   
    , then 
    , else 
    , for  
    , true 
    , false
},    
sensitive=false,
morecomment=[l]{\#},
morestring=[b]",
}
\tiny\color{gray},
\definecolor{mauve}{rgb}{0.88, 0.69, 1.0}
\newcommand{\etc}{\textit{etc}}
\newcommand{\ie}{\textit{i.e.\ }}
\begin{document}

\title{User Guided Abductive Proof Generation for Answer Set Programming
  Queries (Extended Version)}

\author{Avishkar Mahajan}
\orcid{0000-0002-9925-1533}
\author{Meng Weng Wong}
\orcid{0000-0003-0419-9443}
\affiliation{%
  \institution{Singapore Management University}
  \country{Singapore}
}

\author{Martin Strecker}
\orcid{0000-0001-9953-9871}
\affiliation{%
  \institution{University of Toulouse}
  \country{France}
}

\begin{abstract}
 We present a method for generating possible proofs of a query with respect to a given Answer Set Programming (ASP) rule set using an abductive process where the space of abducibles is automatically constructed just from the input rules alone. Given a (possibly empty) set of user provided facts, our method infers any additional facts that may be needed for the entailment of a query and then outputs these extra facts, without the user needing to explicitly specify the space of all abducibles. We also present a method to generate a set of directed edges corresponding to the justification graph for the query. Furthermore, through different forms of implicit term substitution, our method can take user provided facts into account and suitably modify the abductive solutions. Past work on abduction has been primarily based on goal directed methods. However these methods can result in solvers that are not truly declarative. Much less work has been done on realizing abduction in a bottom up solver like the Clingo ASP solver. We describe novel ASP programs which can be run directly in Clingo to yield the abductive solutions and directed edge sets without needing to modify the underlying solving engine. 
\end{abstract}


\begin{CCSXML}
<ccs2012>
   <concept>
       <concept_id>10003752.10003790.10003795</concept_id>
       <concept_desc>Theory of computation~Constraint and logic programming</concept_desc>
       <concept_significance>500</concept_significance>
       </concept>
   <concept>
       <concept_id>10003752.10003790.10003794</concept_id>
       <concept_desc>Theory of computation~Automated reasoning</concept_desc>
       <concept_significance>500</concept_significance>
       </concept>
   <concept>
       <concept_id>10010405.10010455.10010458</concept_id>
       <concept_desc>Applied computing~Law</concept_desc>
       <concept_significance>500</concept_significance>
       </concept>
 </ccs2012>
\end{CCSXML}

\ccsdesc[500]{Theory of computation~Constraint and logic programming}
\ccsdesc[500]{Theory of computation~Automated reasoning}
\ccsdesc[500]{Applied computing~Law}

\maketitle


\section{Introduction}\label{sec:introduction}

The goal of this paper is to show how a bottom up ASP reasoner like Clingo can be used for Abductive reasoning over First Order Horn clauses. As mentioned in the abstract previous work in abductive reasoning has mostly focused on implementing abduction in a top-down manner with Prolog as the underlying engine. CIFF \cite{mancarella09:_ciff} is a prominent example of this. More recently sCASP \cite{arias19:_const_answer_set_progr_groun_applic,arias_phd_2019} has been developed as a goal directed ASP implementation that can be used for abduction but this too uses a top down method for query evaluation. However there may be use cases where one wants to know all the resulting consequences of an abductive solution to a query with respect to a rule-set. Also, as mentioned in the abstract, top-down methods can sometimes result in solvers that are not truly declarative. Therefore an abductive reasoner that uses a solver like Clingo \cite{gebser12:_answer_set_solvin_pract} can
complement the abilities of goal directed reasoners like sCASP, CIFF \etc.

This paper shows how, given an input ASP rule set, one can write a new ASP program based on that rule set which will yield abductive solutions to queries, with the input ASP rule set as the background theory. The user does not have to explicitly specify the space of abducibles. This translation from the input ASP rule set to the derived ASP program is a purely mechanical one. The key idea is to encode backward chaining over the input rules through the use of meta predicates which incorporate a notion of 'reversing' the input rules to recursively generate pre-conditions from post conditions thereby generating a maximal space of abducibles. Then having generated this maximal space of abducibles, this 'feeds into' another part of the program where we have a representation of the input rules in the normal 'forward' direction. Entailment of the specified query is then checked via an integrity constraint and a minimal set of abduced facts is returned.

The main technical challenges are dealing with situations where input rules have existential variables in pre-conditions or when the query itself has existential variables. The other challenge is to control the depth of the abducibles generation process. The work that seems to come closest to ours is \cite{schueller16:_model_variat_first_order_horn}. It too uses some similar meta predicates to encode backward chaining, and a forward representation of the rules to check for query entailment via integrity constraints.

However there are several novel features in our work.  Firstly, depth  control for abducible generation is done in a purely declarative way as part of the encoding itself without needing to call external functions or other pieces of software. Furthermore, adding facts to the program automatically gives an implicit form of term substitution where Skolem terms or
other 'place-holder' terms occurring in abducibles are replaced away so that
the resulting proof is simplified, without any need for an explicit representation of equality between terms. Past work on this topic such as \cite{schueller16:_model_variat_first_order_horn} models equality between terms via an explicit equality predicate which may become unwieldy. Another approach to dealing with existential variables encountered during the abductive proof search is to simply ground all the rules over the entire domain of constants. However, this can often lead to too many choices for what an existential variable may be substituted for which may result in unexpected/unintuitive solutions. Our method avoids both of these techniques. We present three main sets of abductive proof generation encodings. One of the encodings only supports partial term substitution whereas the other two support full term substitution. Lastly, we also present an encoding which generates a set of directed edges representing a justification 
graph  for the generated proof, where the graph can be of any desired depth.

The rest of the paper is organised as
follows. First we give a brief introduction to Answer Set Programming and Abductive reasoning then, Section~\ref{sec:abductive_proof} defines the problem being tackled more formally. Section~\ref{sec:derived_asp} presents the encodings that facilitate the abductive proof generation and directed edge generation. The sections that follow discuss some formal results regarding completeness, finiteness of abductive proof generation. We also discuss a formal result regarding term substitution. Finally Section~\ref{sec:conclusion} discusses
future work and concludes.

This is an extended version of a paper presented at PPDP~2022 \cite{ppdp_version}.

\subsection{Answer Set Programming}
Answer Set Programming (ASP) is a declarative language from the logic programming family. It is widely used and studied by knowledge representation and reasoning and symbolic AI researchers for its ability to model common sense reasoning, model combinatorial search problems etc. It incorporates the $\textit{negation-as-failure}$ operator as interpreted under the $\textit{stable model semantics}$. Clingo is a well established implementation of ASP, incorporating additional features such as $\textit{choice rules}$ and optimization statements. We shall only briefly touch upon various aspects of ASP and Clingo here. The reader may consult \cite{gebser12:_answer_set_solvin_pract} for a more thorough description. Each rule in an ASP program consists of a set of body atoms. Some of these body atoms maybe negated via the negation as failure operator $not$. Rules with no pre-conditions are called facts. Given a set of rules $R$ and a set of facts $F$, the Clingo solver computes all stable models of the ASP program $F\cup R$. For example given the fact $r(alpha)$ and the rules:
\begin{lstlisting}[frame=none]
p(X):-r(X),not q(X).
q(X):-r(X), not p(X).
\end{lstlisting}
The solver will show us 2 models or answer sets given by\\ $\{r(alpha),p(alpha)\}$ and $\{r(alpha),q(alpha)\}$. Note that as opposed to Prolog, Clingo is a bottom up solver meaning that it computes complete stable models (also known as answer sets) given any ASP program. An integrity constraint is formally speaking a rule whose post-condition is the boolean $false$. In ASP, integrity constraints are written as rules with no post-conditions and are used to eliminate some computed answer sets. For example given in the following ASP program
\begin{lstlisting}[frame=none]
r(alpha).
p(X):-r(X),not q(X).
q(X):-r(X), not p(X).
:-q(X).
\end{lstlisting}
any answer set where some instantiation of $q$ is true is eliminated. Hence we get just one answer set. $\{r(alpha),p(alpha)\}$.\\ We will now give a quick introduction to two features of Clingo that we will use throughout this paper. Namely $\textit{choice rules}$ and $\textit{weak constraints}$. Weak constraints are also often known as optimization statements. Intuitively a choice rule is a rule where if the pre-conditions are satisfied then the post-condition may or may not be made true. The post-condition of a choice rule is enclosed in curly brackets. So given the following ASP program:\begin{lstlisting}[frame=none]
r(alpha).
{q(X)}:-r(X).
\end{lstlisting}, where the rule is a choice rule the solver will give us 2 models namely $\{r(alpha)\}$, $\{r(alpha),q(alpha)\}$. If we modify the program by adding an integrity constraint like so:
\begin{lstlisting}[frame=none]
r(alpha).
{q(X)}:-r(X).
:-q(X).
\end{lstlisting}
then we get just one model $\{r(alpha)\}$.\\
Weak constraints are used in Clingo to order answer sets by preference according to the atoms that appear in them. Without going into too much detail let us just explain the meaning of one kind of weak constraint which is the only kind that we will use in the paper namely:
\begin{lstlisting}[frame=none]
:~a(X). [1@1,X]
\end{lstlisting}
Adding this to an ASP program, orders the answer sets of the program according to the number of distinct instantiations of the predicate $a$ in the answer set. The answer set with the least number of instantiations of $a$ is called the most $optimal$ answer set. 
\subsection{Abductive Reasoning}
Briefly, abduction is a reasoning process where given a background theory $T$, we wish to find a set of facts $F$ such that $F\cup T$ is consistent and $F\cup T$ entails some goal $g$ for some given entailment relation. Usually we also want $F$ to be minimal in some well defined sense. Traditional Abductive Logic Programming has a long history, but we have our own definitions of what it means to formulate and solve an abductive reasoning problem and we will make all the relevant concepts/notions precise in the sections that follow.


\section{Abductive Proof Generation Task}\label{sec:abductive_proof}

\subsection{Formal Setup}\label{formalsetup}

\begin{definition}[Abductive Proof Generation Task]\label{def:abductive_proof_generation_task}

Given a source ASP rule set $R$, consider the tuple $\langle R,q,U,C,N \rangle$, which we will refer to as the $\textit{Abductive Proof Generation Task}$. In this tuple,  $R$ denotes a set of input ASP rules, which we shall also refer to as the input rules or the source rules throughout the rest of this paper. $q$ is either a possibly un-ground or partially ground positive atom or, a ground negation-as-failure atom. $q$ intuitively represents the goal of our abductive reasoning process. In the context of an abductive proof generation task we may also sometimes refer to $q$ as the $query$. 
The set $U$ consists of 2 subsets, $U = U_{f} \cup U_{a}$. Here $U_{f}$ is a set of user provided facts. $U_{a}$ is a set of integrity constraints that prevents certain atoms from being abduced. Throughout the rest of this paper we may sometimes just refer to the set $U$ as a whole making it clear what is contained in the subsets. $C$ denotes a set of ASP constraints which constrain which atoms may or may not appear in the complete model that results from the rules, user provided facts and abducibles. Finally we have the non-negative integer $N$. This acts as the depth control parameter for abductive proof generation. 
 \end{definition}
Given an abductive proof generation task $\langle R,q,U,C,N \rangle$,let us define what we mean by a $\textit{General Solution}$ to the task

\begin{definition}[General Solution]\label{def:abductive_proof_generation_solution} \
Given an abductive proof generation task $\langle R,q,U,C,N \rangle$, we say that $F$ is a general solution to this task if:
\begin{enumerate}
    \item If $q$ is a positive atom, it is in some answer set $\mathcal{A}$ of $F\cup U_{f}\cup R\cup C$ where the un-ground variables in $q$ have been replaced with any set of ground terms. If $q$ is a ground negation-as-failure (NAF) atom, then there exists an answer set $\mathcal{A}$ of $F\cup U_{f}\cup R\cup C$ such that $q$ is $not$ in $\mathcal{A}$.
    \item $F$ does not violate any of the integrity constraints in $U_{a}$, ie. $F$ does not contain abducibles that are specifically disallowed by the constraints in $U_{a}$.
    \item $F$ does not have any atoms whose depth level is greater than $N$.
\end{enumerate}
\end{definition} We will next state some assumptions we make on the set of input ASP rules $R$ and then also define what we mean by the depth level of an atom. After this we shall exemplify all these definitions with an example. 
\subsection{Input ASP program}
When considering any abductive proof generation task we make the following assumptions on the input ASP rule set $R$. We assume that each source ASP rule has exactly the following form:
\begin{lstlisting}[frame=none]
pre_con_1(V1),pre_con_2(V2),...,pre_con_k(Vk),
    not pre_con_k+1(Vk+1),...,not pre_con_n(Vn) -> post_con(V).
\end{lstlisting}
We further make the folllowing assumptions:

\begin{enumerate}
    \item Each pre-condition $pre\_con_{i}(V_{i})$ is atomic and so is the post-condition $post\_con(V)$.
    \item The $not$ in front of the pre-conditions denotes \textit{negation as failure} interpreted under the \textit{stable model semantics}
    \item $V_{i}$ is the set of variables occuring in the $i^{th}$ pre-condition which is either $pre\_con_{i}(V_{i})$ or $not$ $pre\_con_{i}(V_{i})$ and $V$ is the set of variables occuring in the post condition $post\_con(V)$. We assume that $V\subseteq V_{1}\cup V_{2}\cup ... \cup V_{n}$.
    \item Each variable occurring in the post condition is universally quantified over, and each variable that occurs in some pre-condition but not the post condition is existentially quantified. In particular together with (3), this means that there are no existentially quantified variables in a rule post-condition.
    \item Each variable that occurs in a negation-as-failure pre-condition also occurs in some positive pre-condition.
    \item Each input rule of the form above is assigned some unique rule id.
\end{enumerate}

We further assume that given any integrity constraint $c$ in $C$, every variable in a negation-as-failure atom in $c$ also occurs in some positive atom in $c$. We will now define the depth level of an atom with respect to a given input source ASP rule set $R$ and query $q$.

\begin{definition}[Depth level of an atom]\label{depthlvl}
Given a ASP rule set $R$ and some ground or partially ground positive atom $q$ which we shall call the query, we define a map $\phi_{R,q}$ that maps an arbitrary positive atom to a set of non-negative integers. We
will describe this map rather informally. For any atom  $q'$ such that $q'$ is obtained from $q$ by replacing the variables in $q$, with some ground terms we have, $0\in
\phi_{R,q}(q')$. Now the rest of the definition is recursive. Given an atom $a$, the
non-negative integer $n$, $n\in \phi_{R,q}(a)$ if and only if there exists a rule
$r$ in $R$ and some substitution $\theta$ of the variables in $r$
such that there exists some precondition (NAF or positive) $p$ of $r$ such 
that $\theta$ applied to $p$ gives $a$ and $\theta$ applied to the post
condition of $r$ gives some atom $a'$ where $n-1$ $\in$ $\phi_{R,q}(a')$.

Given an atom $a$ let $\phi^{min}_{R,q}(a)$ be $-1$ if the set $\phi_{R,q}(a)$ is empty
and let $\phi^{min}_{R,q}(a)$ be the minimum member of the set $\phi_{R,q}(a)$
otherwise. Then $\phi^{min}_{R,q}(a)$ is defined to be the $depth$ $level$ of $a$
with respect to the rule set $R$ and query $q$. If $q$ is a ground $NAF$ atom then given some positive atom $a$ $\phi^{min}_{R,q}(a)$ is simply given by $\phi^{min}_{R,\tilde{q}}(a)$, where $\tilde{q}$ is obtained from $q$ by removing the $not$ operator from in front of $q$.  
\end{definition}
For example if $R$ consisted of the rules
\begin{lstlisting}[frame=none]
a(X):-b(X).
b(X):-a(X).
\end{lstlisting} and $q$ was $a(X)$, meaning $q$ is un-ground then we would have that given any term $t$, $\phi_{R,q}^{min}(a(t)) = 0$, $\phi_{R,q}^{min}(b(t)) = 1$, and for any other atom $p$, $\phi_{R,q}^{min}(p) = -1$.


\section{Derived ASP Programs}\label{sec:derived_asp}
\subsection{A First Example}
Before we give the details of the main sets of rule translations that allow
abductive reasoning and justification generation, here is a simple example to
illustrate some key ideas and what the desired output for an abductive problem
is. Consider the rule set $R$ given by the 3 rules
\begin{lstlisting}[frame=none]
p(X,Y):-q(X,Y),s(Y).
p(X,Y):-g(X,Y).
d(X,Y):-g(X,Y).
\end{lstlisting}
Now let $q$ be $p(john,james)$, let $U$ consist only of a single constraint that disallows any
instance of the predicate $p$ from being abduced. Next let the set
$C$ contain a single constraint that disallows any instance of the predicate 
$d$, meaning that we require a stable
model of the user given facts, the abducibles, and the rules, which does not
contain any instance of the predicate $d$, finally let $N = 2$. Then for this
problem the minimal abductive solution can be represented by
\textit{abducedFact(q(john,james)), abducedFact(s(james))}, which is what we
want to get out of our encoding. Intuitively, the way we will solve this
abductive reasoning problem is by first encoding the input rules such as the
ones above in the usual forward direction. Then we will have a representation which
corresponds to 'reversing' the rules, \ie we go from post-conditions to
pre-conditions. These 'reveresed' rules
generate a maximal space of abducibles which then feed into the forward rule
translation. Finally we will have integrity constraints that ensure that the
atom which we want to be true (represented by $q$) is indeed entailed by the
abductive solution. An adapted version of this 'reversed rule' representation
also enables us to generate a set of directed edges corresponding to a
justification graph. The technical challenge in this process comes from
finding a way to deal with existential variables and the depth of abducible generation.

\subsection{Input Rule translations}

\subsubsection{Forward Translation}
Given an input ASP rule 
\begin{lstlisting}[frame=none]
pre_con_1(V1),pre_con_2(V2),...,pre_con(Vk),
   not pre_con(Vk+1),...,not pre_con_n(Vn) -> post_con(V).
\end{lstlisting}
we translate it in the following way:
\begin{lstlisting}[frame=none]
holds(post_con(V)):-holds(pre_con(V1)),...,holds(pre_con(Vk)), not holds(pre_con(Vk+1)), ..., not holds(pre_con(Vn)). 
\end{lstlisting}

We repeat this for each source ASP rule. For each constraint in $C$, we simply
enclose each atom in the constraint inside the $holds$ predicate. For example
if $C$ contains the constraint $:-b(X,Y).$, (meaning that we require an
abductive solution such that the there exists a stable model of the abduced
facts, rules and user provided facts, which contains no instantiations of the
predicate $b$), we encode that constraint as: 
\begin{lstlisting}[frame=none]
:-holds(b(X,Y)).
\end{lstlisting}

\subsubsection{Generating Abducibles}
Before diving into the details of the abducibles generation encoding let us give a brief intuition for some key meta-predicates and rules that will show up. Firstly the binary meta-predicate $query$ has as its first argument an atom which may become a candidate for abduction and as its second argument an integer corresponding roughly to the depth level of that atom with respect to $(R,q)$. Next the meta-predicate $explains$ has as its first argument, an atom which forms a pre-condition of some input rule instantiation, and as its second argument the corresponding input rule instantiation post condition. The third argument of the $explains$ meta predicate carries the depth of the atom in the first argument. The final key meta-predicate is $createSub$. $createSub$ carries information about input rule instantiations. The first argument of $createSub$ is a tuple which carries generated rule instantiations of a particular rule via the instantiations of the variables in the rule in some fixed order, the second argument of $createSub$ is again an integer depth parameter. Let us now explain the general structure of some of the rules in the abducibles generation encoding to illustrate the purpose of these meta predicates. Firstly we have abduction generation rules with the structure:
\begin{lstlisting}[frame=none]
createSub(...,N+1):-query(...,N),N<M,max_ab_lvl(M).
\end{lstlisting}
Intuitively in this rule, the first argument of the $query$ meta-predicate generates an instantiation of an input rule where that atom is the post-condition of the rule. Skolem terms or other 'place-holder' terms are used for rules with existential variables in pre-conditions. Then we have abducible generation rules with the structure:
\begin{lstlisting}[frame=none]
explains(...,N):-createSub(...,N).
\end{lstlisting}
Here a given $createSub$ atom generates an $explains$ atom where the first argument of the $explains$ atom carries an input rule pre-condition given by the rule instantiation corresponding to the $createSub$ atom and the second argument of the $explains$ atom is the instantiation of the rule post-condition. Next we have abducible generation rules with the structure:
\begin{lstlisting}[frame=none]
query(...):-explains(...).
\end{lstlisting}
Here the first two arguments of the $explains$ meta predicate, get passed on to generate two instances of the $query$ meta predicate. One can see from this that intuitively, a given $query$ atom that carries an input rule post-condition generates a $createSub$ atom that carries an input rule instantiation. This then generates an $explains$ atom whose left hand side argument is a rule pre-condition which then generates a new $query$ atom. This is the central part of the backward chaining process. The choice rule
\begin{lstlisting}[frame=none]
{abducedFact(X)}:-query(X,N).
\end{lstlisting}
Then produces the abducibles. Next we will describe the general structure of two kinds of rules which are key to enable a notion of term substitution where user-input is taken into account to simplify the generated abductive proof. First we have 
\begin{lstlisting}[frame=none]
createSub(...):-createSub(...),holds(...).
\end{lstlisting} 
and next we have 
\begin{lstlisting}[frame=none]
createSub(...):-createSub(...),query(...).
\end{lstlisting}
In the first kind of rule, arguments of $holds$ atoms can 'combine' with instances of the $createSub$ meta-predicate to yeild new instances of $createSub$. The intuition here is that if a certain instance of an input rule precondition/postcondition has  been established via the $holds$ predicate then this creates new substitutions for variables in that input rule which then leads to other input rule preconditions given by that substitution being included in the space of generated abducibles. The same intuition applies for abducible generation rules of the second kind, where instances of the $query$ predicate create new input rule substitutions. It turns out that constructing these abducible generation rules in a 'naive' way can lead to infinite answer sets, when there are skolem terms involved, even when the integer depth argument of all these meta-predicates is bounded. Hence we have different encodings for when there are skolem terms involved versus when there no skolem terms involved. Let us now get into the technical details of how these rules are constructed. First we will need a way to assign appropriate skolem terms to existential variables in pre-conditions. Given some rule in our rule set, say rule $r_{j}$, We fix some order
$O_{{j}}$ on the variables occuring in the combined set of variables from
the post and pre-conditions of the rule $r_{j}$. Now we will describe a
skolemization map that assigns an existential variable in a pre-condition of
$r_{j}$ to a skolem term. Firstly, let the rule $r_{j}$ carry unique integer
id $j$. Let $v$ be a variable that occurs in some rule precondition but not
the post condition. Then under this skolemization map, the variable $v$ gets
mapped to
\begin{lstlisting}[frame=none]
skolemFn_j_v_(V)    
\end{lstlisting}
where $V$ denotes the variables in the post-condition occuring in the order inherited from $O_{r_{j}}$. For example consider the rule $r_{1}$: 
\begin{lstlisting}[frame=none]
p(Y,X):- q(X),r(X,Y),s(Z).    
\end{lstlisting} 
Assume that this rule carries integer id $1$. Let $O_{1}$ be $[X,Y,Z]$. Then
the variable $Z$ gets mapped to the skolem term $skolemFn\_1\_Z(X,Y)$.

\subsubsection{AG1}
Given an input ASP rule $r$ in $R$
\begin{lstlisting}[frame=none]
pre_con_1(V1),pre_con_2(V2),...,pre_con(Vk),not pre_con(Vk+1),...,not pre_con_n(Vn) -> post_con(V).
\end{lstlisting}
our first set of translated abducible generation rules $AG1$ is given by the following. 
\begin{lstlisting}[frame=none]
create_subs(sub_Inst_j((V_sk),N+1):-query(post_con(V),N),max_ab_lvl(M), N<M-1.
\end{lstlisting}
Here $V_{sk}$, denotes the ordered list $O_{j}$ but with existential variables replaced by their skolem term counter parts. Here $j$ is the integer id for the rule. The integer $M=N+1$, where $N$ is the fifth entry of the tuple representing the abduction task, which represents the maximum depth of an abducible.
Next we have the following rules:
\begin{lstlisting}[frame=none]
explains(pre_con_1(V1),post_con(V),N):-create_subs(sub_Inst_t((V),N).
explains(pre_con_2(V2),post_con(V),N):-create_subs(sub_Inst_t((V),N).
...
explains(pre_con_n(Vn),post_con(V),N):-create_subs(sub_Inst_t((V),N).
\end{lstlisting}
Here $V$ denotes all the variables occuring in the rule in the order $O_{j}$.
\subsubsection{AG2}
Now we shall construct the second set
of abducible generating rules $AG2$. Given $O_{j}$ construct $F_{j}$ by adjoining the character
$V\_$ to each entry of $O_{j}$. So for our example above $F_{1}$ becomes
$[V\_X,$ $V\_Y,V\_Z]$. Given a pre-condition $p$
occurring in rule $r$ with id $j$, $M_{r,p}$ is an ordered list constructed as follows. The
$i_{th}$ element of $M_{r,p}$ is the $i_{th}$ element of $O_{j}$ if the
$i_{th}$ element of $O_{j}$ is a variable which occurs in $p$. Otherwise, the
$i_{th}$ element of $M_{r,p}$ is given by the $i_{th}$ element of $F_{r}$. Now
for each negated or positive precondition $p$ we have the following rule:
\begin{lstlisting}[frame=none]
 create_subs(sub_Inst_t(M_(r,p)),M-1):-create_subs(sub_Inst_t(F_r),N),
                       holds(p),max_ab_lvl(M), N<M.   
\end{lstlisting}
Repeat this for each pre-condition. This is the set of rules $AG2$.


\subsubsection{AG3}
Finally $AG3$ consists of just the single rule:
\begin{lstlisting}[frame=none]
query(X,N):-explains(X,Y,N),max_ab_lvl(M),N<M.
\end{lstlisting}
Let us consider another example.
Consider the input ASP rule:
\begin{lstlisting}[frame=none]
a(X):- b(X,Y,Z), not c(X), not d(Y) .    
\end{lstlisting}
Say this rule $r$ has rule id $5$. 
Let $O_{5}$ be $[X,Y,Z]$. Here the encoding for the rule $AG1$ is:
\begin{lstlisting}[frame=none]
create_subs(subs_Inst_5(X,skolemFn_5_Y(X),skolemFn_5_Z(X)),N+1):-
   query(a(X),N),max_ab_lvl(M),N<M-1.

explains(b(X,Y,Z),a(X),N):-create_subs(subs_Inst_5(X,Y,Z),N).

explains(c(X),a(X),N):-create_subs(subs_Inst_5(X,Y,Z),N).

explains(d(Y),a(X),N):-create_subs(subs_Inst_5(X,Y,Z),N).
\end{lstlisting}
AG2 is given by:

\begin{lstlisting}[frame=none]
create_subs(subs_Inst_5(X,Y,Z),M-1):-
     create_subs(subs_Inst_5(V_X,V_Y,V_Z),N), 
     holds(b(X,Y,Z)),max_ab_lvl(M),N<M.

create_subs(subs_Inst_5(X,V_Y,V_Z),M-1):-
create_subs(subs_Inst_5(V_X,V_Y,V_Z),N), holds(c(X)),max_ab_lvl(M),N<M.

create_subs(subs_Inst_5(V_X,Y,V_Z),M-1):-
create_subs(subs_Inst_5(V_X,V_Y,V_Z),N), holds(d(Y)),max_ab_lvl(M),N<M.
\end{lstlisting}

\subsubsection{Supporting code for Abduction} 
Given the original problem $\langle R,q,U,C,N\rangle$, set $M=N+1$. Then we have the following: 
\begin{lstlisting}[frame=none]
max_ab_lvl(M).
query(Q,0):-generate_proof(Q).
{abducedFact(X)}:-query(X,M).
holds(X):-abducedFact(X).

holds(X):-user_input(pos,X).
\end{lstlisting}
For any predicate $p$, say of arity $n$ such that no instance of $p$ may be abduced, we add the constraint.
\begin{lstlisting}[frame=none] 
:-abducedFact(p(X1,X2,...,Xn)).
\end{lstlisting}
If instead only a specific ground instance of $p$ or a partially ground instance of $p$ should be prevented from being abduced then we simply adapt the above constraint accordingly. For instance if $p$ is a binary predicate and we want that no instance of $p$ where the first argument is $alpha$ should be abduced we have the constraint
\begin{lstlisting}[frame=none] 
:-abducedFact(p(alpha,X)).
\end{lstlisting}
Next we add the following $weak$ constraint so that in the optimal abductive
solution as few abducibles as possible are used.
\begin{lstlisting}[frame=none] 
:~abducedFact(X).[1@1,X]
\end{lstlisting}

\subsubsection{Specifying the goal}
Here is the code to encode the goal of the abductive reasoning process represented by the parameter $q$. If q is a ground atom say $p(a1,a2..,an)$ for some predicate $p$ then we
have:
\begin{lstlisting}[frame=none]
generate_proof(p(a1,a2,...,an)).
goal:-holds(p(a1,a2,..,an)).
:- not goal.
\end{lstlisting} 
Here the constraint \begin{lstlisting}[frame=none]
:- not goal. \end{lstlisting}
ensures that the abduced facts together with the input rules actually do entail the goal.
If on the other hand $q$ is un-ground or only partially ground then we have the following. Say our goal is of the form $p(a,X,b,Y,Y)$, which means that $X$,$Y$ are existential variables. Then for the example we have the following:
\begin{lstlisting}[frame=none]
generate_proof(p(a,v1,b,v2,v2)).
goal:-holds(p(a,X,b,Y,Y)).
:- not goal.
\end{lstlisting}
Here $v1$, $v2$ are fresh constants. If $q$ is a ground NAF atom say $\textit{not p}$ then we simply write 
\begin{lstlisting}[frame=none]
generate_proof(p).
goal:-not holds(p).
:- not goal.
\end{lstlisting}
Given $\langle R,q,U,C,N \rangle$ let the complete derived ASP program that uses $AG1$, $AG2$, $AG3$, the supporting code and the forward translation be called $P^{res}_{\langle R,q,U,C,N\rangle}$. We will now give a modified abduction generation encoding which can be used when no rule in $R$ contains an existential variable. As mentioned before, it turns out that using this modified encoding on rules that have existential variables can lead to infinite answer sets. After giving this modified encoding we will explain in detail the encodings with the aid of an example. 
\subsection{Extending abduction generation space for rules without existential variables}
When we have rules without existential variables, we can construct a larger
space of abducibles without worrying about our ASP programs having infinite
answer sets because there are now no skolem expressions. The encoding $AG1$ is
the same as before but now clearly there will be no skolem terms. The new
version of $AG2$ which we shall call $AG2_{exp}$ now becomes for each rule 
\begin{lstlisting}[frame=none]
create_subs(sub_Inst_t(M_(r,p)),N):-
   create_subs(sub_Inst_t(F_r),N),holds(p).   
\end{lstlisting}
Notice that as opposed to the previous encoding, here the integer argument of the $createSub$ predicate on the left hand side is $N$ as opposed to $M-1$. Repeat this for each pre-condition $p$. Then for the post-condition of the rule $p'$, we have:
\begin{lstlisting}[frame=none]
create_subs(sub_Inst_t(M_(r,p')),N):-
   create_subs(sub_Inst_t(F_r),N),holds(p').   
\end{lstlisting}
 
Here $M_{r,p'}$ is defined exactly the same way as $M_{r,p}$ for some pre-condition $p$. Next, for each rule and for each pre-condition $p$ in the rule we have.
\begin{lstlisting}[frame=none]
create_subs(sub_Inst_t(M_(r,p)),N):-
   create_subs(sub_Inst_t(F_r),N),query(p,L).   
\end{lstlisting}

For the post-condition $p'$ we have:
\begin{lstlisting}[frame=none]
create_subs(sub_Inst_t(M_(r,p')),N):-
   create_subs(sub_Inst_t(F_r),N),query(p',L).   
\end{lstlisting}

This completes the encoding $AG2_{exp}$. The adapted version of $AG3$,
$AG3_{exp}$, is given by adding to $AG3$ one extra rule. So $AG3_{exp}$ is: 
\begin{lstlisting}[frame=none]
query(X,N):-explains(X,Y,N),max_ab_lvl(M),N<M.
query(Y,N-1):-explains(X,Y,N),max_ab_lvl(M),0<N,N<M.
\end{lstlisting}
Given $\langle R,q,U,C,N \rangle$ let the complete ASP program that uses $AG1_{exp}$, $AG2_{exp}, AG3_{exp}$, the supporting code and the forward translation be called $P_{<R,q,U,C,N>}^{exp}$
\subsection{Discussion of Abduction space generation}

\subsubsection{Full term substitution}
We first give an example of the expanded abduction space encoding to explain
the intuition behind various parts of the encoding. Consider the rule set
below that has no existential variables but which has negation as failure and
where the goal is un-ground.
\begin{lstlisting}[frame=none]
relA(X,Y):-relB(X,Y), relD(Y), not relE(Y).
relE(Y):-relD(Y), not relF(Y).
\end{lstlisting}
Let the goal $q$ be $relA(P,R)$, where $P$,$R$ are un-ground existential
variables. Next suppose that the only constraint on abducibles is that no
instantiation of $relA$ can be abduced and further suppose that the set of
user provided facts is initially empty. Finally let $N=4$. Here is the
complete encoding for this problem.

\begin{lstlisting}[numbers=left]
max_ab_lvl(5).
% Encoding the goal
generate_proof(relA(v1,v2)).
query(X,0):-generate_proof(X).
goal:-holds(relA(P,R)).
:- not goal.

% forward translation
holds(relA(X,Y)) :- holds(relB(X, Y)),holds(relD(Y)), not holds(relE(Y)).
holds(relE(Y)) :- holds(relD(Y)), not holds(relF(Y)).

% AG1_exp
createSub(subInst_r1(X,Y),N+1) :- query(relA(X,Y),N),max_ab_lvl(M),N<M-1. (* \label{abdInstR1} *)
createSub(subInst_r2(Y),N+1) :- query(relE(Y) ,N),max_ab_lvl(M),N<M-1. (* \label{abdInstR2} *)

explains(relB(X, Y), relA(X,Y) ,N) :- createSub(subInst_r1(X,Y),N). (* \label{abdExplR1start} *)
explains(relD(Y), relA(X,Y) ,N) :- createSub(subInst_r1(X,Y),N).
explains(relE(Y), relA(X,Y) ,N) :- createSub(subInst_r1(X,Y),N). (* \label{abdExplR1end} *)


explains(relD(Y), relE(Y) ,N) :- createSub(subInst_r2(Y),N). (* \label{abdExplR2start} *)
explains(relF(Y), relE(Y) ,N) :- createSub(subInst_r2(Y),N). (* \label{abdExplR2end} *)


% AG2_exp for rule 1

createSub(subInst_r1(X,Y),N) :- createSub(subInst_r1(V_X,V_Y),N), holds(relA(X,Y)).
createSub(subInst_r1(X,Y),N) :- createSub(subInst_r1(V_X,V_Y),N), holds(relB(X,Y)).
createSub(subInst_r1(V_X,Y),N) :- createSub(subInst_r1(V_X,V_Y),N), holds(relD(Y)).
createSub(subInst_r1(V_X,Y),N) :- createSub(subInst_r1(V_X,V_Y),N), holds(relE(Y)).

createSub(subInst_r1(X,Y),N) :- createSub(subInst_r1(V_X,V_Y),N), query(relA(X,Y),L).
createSub(subInst_r1(X,Y),N) :- createSub(subInst_r1(V_X,V_Y),N), query(relB(X,Y),L).
createSub(subInst_r1(V_X,Y),N) :- createSub(subInst_r1(V_X,V_Y),N), query(relD(Y),L).
createSub(subInst_r1(V_X,Y),N) :- createSub(subInst_r1(V_X,V_Y),N), query(relE(Y),L).

% AG2_exp for rule 2

createSub(subInst_r2(Y),N) :- createSub(subInst_r2(V_Y),N), holds(relE(Y)).
createSub(subInst_r2(Y),N) :- createSub(subInst_r2(V_Y),N), holds(relD(Y)).
createSub(subInst_r2(Y),N) :- createSub(subInst_r2(V_Y),N), holds(relF(Y)).


createSub(subInst_r2(Y),N) :- createSub(subInst_r2(V_Y),N), query(relE(Y),L).
createSub(subInst_r2(Y),N) :- createSub(subInst_r2(V_Y),N), query(relD(Y),L).
createSub(subInst_r2(Y),N) :- createSub(subInst_r2(V_Y),N), query(relF(Y),L).

% AG3_exp
query(X,N):-explains(X,Y,N),max_ab_lvl(M),N<M.(* \label{abdQuery} *)
query(Y,N-1):-explains(X,Y,N),max_ab_lvl(M),N<M,0<N.(* \label{abdAbdHold} *)

% Supporting code
{abducedFact(X)}:-query(X,N).  (* \label{abdChoice} *)
holds(X):-abducedFact(X).      (* \label{abdHoldsAbduced} *)
holds(X):-user_input(pos,X).   (* \label{abdHoldsUI} *)


:~abducedFact(Y).[1@1,Y]    (* \label{abdAbducedFact} *)
:-abducedFact(relA(X,Y)).

\end{lstlisting}

We will now discuss various parts of the encoding.

As mentioned earlier, the general idea is to recursively generate a
maximal space of abducibles by 'reversing' the rules and then checking via the
Forward Translation and encoding of the goal, which abducibles are needed
for entailment of the original query.  
More specifically in line with the intuitive discussion from before, any atom
of the form $query(h,i)$ generates an input rule instantiation where $h$ is
the post-condition that particular rule instantiation. Such rule
instantiations are represented by the $createSub$ atom. In the example above,
this is done via lines \ref{abdInstR1}, \ref{abdInstR2} of the encoding. Line \ref{abdInstR1} corresponds to
instantiations of rule 1 and line \ref{abdInstR2} corresponds to instantiations of rule
2. Then any such $createSub$ atom, generates the appropriate set of $explains$
atoms. This is lines \ref{abdExplR1start}-\ref{abdExplR1end} for rule 1 in the example, and lines \ref{abdExplR2start}, \ref{abdExplR2end} for
rule 2. The first argument of an $explains$ atom is a pre-condition or body
atom corresponding to the rule instantiation given by the $createSub$
atom. The second argument is the post-condition or head of the rule
instantiation. We have one such $explains$ atom for each rule
pre-condition. Via line \ref{abdQuery}, the first argument of an $explains$ atom becomes
the first argument of a $query$ atom. This new $query$ atom then recursively
generates more $query$ atoms via the process described. Any $query$ atom
corresponds to a candidate for abduction via the choice rule in line \ref{abdChoice}. Any
fact which is abduced must hold due to line \ref{abdHoldsAbduced}. At this point, before moving
ahead let us first briefly comment further upon the integer arguments occuring in the
$explains$, $createSub$ and $query$ atoms.

The integer parameter roughly
represents the depth of an abducible in the proof graph of the original
query. When a $query$ atom carrying the post-condition of a rule generates a
rule instantiation like in line \ref{abdInstR1} for example, the integer argument of the
corresponding $createSub$ atom increases by one. Then an $explains$ atom
derived from the application of a rule like line \ref{abdExplR1start} retains the same integer
argument and so does the corresponding fresh $query$ atom generated from the
application of the rule on line \ref{abdQuery}. Note that a fresh $query$ atom can only be
created from an $explains$ atom if the integer parameter of the $explains$
atom is less than $M$. The use of these integer parameters is important
when we need skolem functions/terms in our abducible generation encoding due
to having rules with existentially quantified variables in pre-conditions. The
use of these integer parameters allows us to control the depth of the
abducible generating space thus preventing infinite answer sets even in the
presence of skolem functions. We will discuss this more later on. For now let
us turn our attention to some of the other parts of the encoding. The
$AG2_{exp}$ encoding enables a notion of implicit term substitution
in (minimal) abductive solutions. This set of rules creates new instantiations
of the input rules based on which other atoms are true. As stated earlier, creating new instantiations of the core input rules via the $createSub$ atoms, then allows
new abducibles to be added to the generated space of abducibles. Let us
illustrate some of these ideas with an example. Upon running the above ASP
program as the optimal solution given by the solver is: 

\begin{lstlisting}[frame=none]
abducedFact(relD(v2)) 
abducedFact(relB(v1,v2)) 
abducedFact(relF(v2))    
\end{lstlisting}
Now if $relB(john,james)$ is added to the set of user provided facts then firstly, due to line \ref{abdHoldsUI} $holds(relB(john,james))$ becomes true. Then we have the following instantiation of line 28.
\begin{lstlisting}[frame=none]
createSub(subInst_r1(john,james),1):-
   createSub(subInst_r1(v1,v2),1),holds(relB(john,james)). 
\end{lstlisting}
Hence due to lines \ref{abdExplR1start} and \ref{abdAbdHold}, the atom
$query(relA(john,james),0)$ becomes true. This leads to the atoms
$query(relD(james),1)$ and $query(relF(james),2)$ becoming true. Hence the atoms $relD(james)$ and $relF(james)$ become part of the space of abducibles and the solver gives us the new optimal solution:
\begin{lstlisting}[frame=none]
abducedFact(relD(james)) 
abducedFact(relF(james)) 
\end{lstlisting}
On the other hand if we instead add the fact $relF(mary)$ to the initially empty set of user provided facts then we get the follwing instantiation of line 41:
\begin{lstlisting}[frame=none]
createSub(subInst_r2(v2),2):-createSub(subInst_r2(v2),2),
holds(relF(mary)).
\end{lstlisting}
Hence the atom $createSub(subInst\_r2(mary),2)$ becomes true. Via line 22, and line \ref{abdAbdHold} the atom $query(relE(mary),1)$ becomes true. We thus get the following instantiation of line 35:
\begin{lstlisting}[frame=none]
createSub(subInst_r1(v1,mary),1):- createSub(subInst_r1(v1,v2),1), query(relE(mary),1).   
\end{lstlisting}
Thus the atom $createSub(subInst\_r1(v1,mary),1)$ becomes true, which then via say line \ref{abdExplR1start} and line \ref{abdAbdHold} causes the atom\\ 
$query(relA(v1,mary),0)$ to become true. Now, because of\\ $query(relA(v1,mary),0)$, 
$relD(mary)$, $relB(v1,mary)$ become part of the space of abducibles and the solver gives us the optimal abductive solution: 
\begin{lstlisting}[frame=none]
abducedFact(relD(mary)) 
abducedFact(relB(v1,mary))
\end{lstlisting}
and a similar result is obtained if we add an instance of the predicate $relD$
to the initially empty set of user provided facts. Thus with this encoding we
have full implicit term substitution. The place holder or 'dummy' variables
$v1$, $v2$, always get replaced away in the optimal abductive solution based
on the user provided facts.
A subtle point here is that there is no notion of
equality between terms. We are not setting $v2 = mary$. We are instead
enlarging the space of abducibles in a systematic way based on user provided
facts so that a more optimal solution which involves replacing the term $v2$
for the term $mary$ can be realized. Note that adding an 'unrelated' fact such
as say $relG(mary)$ will not enlarge the space of abducibles in any way. So in
some sense what we have is a method to enlarge the space of abducibles in an
'economical' way while still supporting a notion of term substitution. We will
formulate and prove a formal result regarding this notion of term substitution
later on.

\subsubsection{Partial term substitution}

When skolem terms/function are used to handle existential variables, we have
to use the non- expanded abducible generation encoding which forces us to give
up on complete term substituion.  This is because having the complete term
substitution mechanism can result in programs that have infinitely large
abducible spaces. To recover finiteness of the space of the abducibles we have
to forgo full term substitution. What we get instead is a kind of partial term
substitution mechanism where skolem terms may only sometimes be substituted
for user provided terms.  First let us examine why in the presence of skolem
functions, even a subset of the expanded abduction generation encoding can
lead to infinite answer sets.

Consider the rule set consisting of just the single rule 
\begin{lstlisting}[frame=none]
relA(X):-relB(X,Y),relA(Y).
\end{lstlisting}

Suppose the goal is $a(john)$
Consider the encoding below, which is a subset of the expanded abducible generation encoding.
\begin{lstlisting}[numbers=left]
max_ab_lvl(5).
query(relA(bob),0).
:-not holds(relA(bob)).

holds(relA(X)) :- holds(relB(X, Y)),holds(relA(Y)).

explains(relB(X, Y), relA(X) ,N) :- createSub(subInst_r1(X,Y),N).
explains(relA(Y), relA(X) ,N) :- createSub(subInst_r1(X,Y),N).


createSub(subInst_r1(X,skolemFn_r1_Y(X)),N+1) :- query(relA(X) ,N),max_ab_lvl(M),N<M-1.

createSub(subInst_r1(X,Y),N) :- createSub(subInst_r1(V_X,V_Y),N), holds(relB(X, Y)).
createSub(subInst_r1(V_X,Y),N) :- createSub(subInst_r1(V_X,V_Y),N),holds(relA(Y)).


query(X,N):-explains(X,Y,N),max_ab_lvl(M),N<M.
{abducedFact(X)}:-query(X,N).
holds(X):-abducedFact(X).
holds(X):-user_input(pos,X).

:~abducedFact(Y).[1@1,Y]
:-abducedFact(relA(bob)).
\end{lstlisting}

Due to line 11 in the encoding we get the atom\begin{lstlisting}[frame=none]
createSub(subInst_r1(bob,skolemFn_r1_Y(bob)),1).
\end{lstlisting}
Then due to lines 7 and 8 of the encoding we get the atoms \begin{lstlisting}[frame=none]
query(relA(skolemFn_r1_Y(bob)),1) query(relB(bob,skolemFn_r1_Y(bob)),1).
\end{lstlisting} 

Then via lines 11 and 7 and due to the atom
\begin{lstlisting}[frame=none]
query(relA(skolemFn_r1_Y(bob)),1)
\end{lstlisting} we get the atom \begin{lstlisting}[frame=none]
query(relB(skolemFn_r1_Y(bob), skolemFn_r1_Y(skolemFn_r1_Y(bob))),2)
\end{lstlisting} 
Then due to lines 18, 19, we get the atom \begin{lstlisting}[frame=none]
holds(relB(skolemFn_r1_Y(bob), skolemFn_r1_Y(skolemFn_r1_Y(bob))
\end{lstlisting}
Then due to line 13 of the encoding and the atom \begin{lstlisting}[frame=none]
createSub(subInst_r1(bob,skolemFn_r1_Y(bob)),1)
\end{lstlisting} we get the atom
\begin{lstlisting}[frame=none]
createSub(subInst_r1(skolemFn_r1_Y(bob), skolemFn_r1_Y(skolemFn_r1_Y(bob))),1)
\end{lstlisting}
Then due line 8 and line 17, we get the atom \begin{lstlisting}[frame=none]
query(relA(skolemFn_r1_Y(skolemFn_r1_Y(bob)),1)
\end{lstlisting}

In this way we can see that with the encoding above we would have answer sets
that contain atoms of the form\\ $query(relA(skolemFn\_r1\_Y(...),1)$ for
aribtrarily large skolem function nesting depth. Hence the encoding above
leads to infinitely large answer sets.

Intuitively, the core problem is lines like 13, 14 where the skolem depth of
terms in the $createSub$ predicate has no relation with the integer argument
of the $createSub$ predicate, thus allowing for abducibles, where the skolem
depth of the arguments inside predicates can be arbitrarily large despite
having a finite maximum abduction depth level. The solution to this problem
then is to replace the $N$ occuring as the integer argument of the $createSub$
predicate in the head of the rule on lines 13, 14 with $M-1$, where the $M$
corresponds to the argument of $max\_ab\_lvl$. This means that $query$ atoms
which occur due to the use of rules like line 13, 14 cannot further cause
fresh $query$ atoms to be added to the abducibles space via rules like the one
on line 11.

As a result of this however we lose complete term substitution. Consider the following abduction problem. $R$ is given by the following ASP rules:
\begin{lstlisting}[frame=none]
relA(P):-relB(P,R),relD(R).
relB(P,R):-relA(R),relC(P).
\end{lstlisting}
Let $q$ be the atom $relA(john)$, let $U$ consist of the constraints $:-abducedFact(relA(john)).$, $:-abducedFact(relB(X,Y)).$ meaning that $q$ cannot itself be abduced and no instantiation of the predicate $relB$ can be abduced. Let the set of user provided facts be empty for now. Let the set $C$ also be empty and let $N = 4$. This is the non expanded abduction encoding for this problem. 
\begin{lstlisting}[numbers=left]
max_ab_lvl(5).

% Encoding the goal
generate_proof(relA(john)).
goal:-holds(relA(john)).
:-not goal.
query(X,0):-generate_proof(X).

% Core rule translation
holds(relA(P)) :- holds(relB(P, R)),holds(relD(R)).
holds(relB(P, R)) :- holds(relA(R)), holds(relC(P)).

% AG1
createSub(subInst_r1(P,skolemFn_r1_R(P)),N+1) :- query(relA(P) ,N),max_ab_lvl(M),N<M-1.
createSub(subInst_r2(P,Q),N+1) :- query(relB(P, Q) ,N),max_ab_lvl(M),N<M-1.



explains(relB(P, R), relA(P) ,N) :- createSub(subInst_r1(P,R),N).
explains(relD(R), relA(P) ,N) :- createSub(subInst_r1(P,R),N).


explains(relA(R), relB(P,R) ,N) :- createSub(subInst_r2(P,R),N).
explains(relC(P), relB(P,R) ,N) :- createSub(subInst_r2(P,R),N).


% AG2 for rule 1
createSub(subInst_r1(P,R),M-1) :- createSub(subInst_r1(V_P,V_R),N), N<M, holds(relB(P, R)),max_ab_lvl(M).
createSub(subInst_r1(V_P,R),M-1) :- createSub(subInst_r1(V_P,V_R),N), N<M, holds(relD(R)),max_ab_lvl(M).

% AG2 for rule 2
createSub(subInst_r2(V_P,R),M-1) :- createSub(subInst_r2(V_P,V_R),N), N<M, holds(relA(R)),max_ab_lvl(M).
createSub(subInst_r2(P,V_R),M-1) :- createSub(subInst_r2(V_P,V_R),N), N<M, holds(relC(P)),max_ab_lvl(M).

% AG3
query(X,N):-explains(X,Y,N),max_ab_lvl(M),N<M.

% Supporting code
{abducedFact(X)}:-query(X,N).
holds(X):-abducedFact(X).
holds(X):-user_input(pos,X).


:~abducedFact(Y).[1@1,Y]
:-abducedFact(relA(john)).
:-abducedFact(relB(X,Y)).


\end{lstlisting}
Running this program in Clingo, we get the output 
\begin{lstlisting}[frame=none]
abducedFact(relC(john))
abducedFact(relD(skolemFn_r1_R(skolemFn_r1_R(john))))
abducedFact(relA(skolemFn_r1_R(skolemFn_r1_R(john))))    
\end{lstlisting}
as the solution with the least number of abducibles.
Now adding $relC(john)$ as a user provided fact gives the following smaller abductive solution
\begin{lstlisting}[frame=none]
abducedFact(relD(skolemFn_r1_R(skolemFn_r1_R(john))))
abducedFact(relA(skolemFn_r1_R(skolemFn_r1_R(john))))    
\end{lstlisting}
Now if we further add $relA(mary)$ to the set of user provided facts then we
get as a minimal abductive solution the answer $relD(mary)$. This is because
by after adding these facts, $holds(relB(john,mary))$ becomes true. Then by
line 28 of the encoding\\
$createSub(subInst\_r1(john,mary),4)$ becomes
true. Then by line 20, and line 36 $query(relD(mary),4)$ becomes true which
then gives us the minimal abductive solution. However if instead of adding
the fact $relA(mary)$ we instead add the fact $relD(mary)$, then we do not get
a  substitution of terms and the minimal abductive
solution is still
\begin{lstlisting}[frame=none]
abducedFact(relD(skolemFn_r1_R(skolemFn_r1_R(john))))
abducedFact(relA(skolemFn_r1_R(skolemFn_r1_R(john))))    
\end{lstlisting} 
This is because by line 29, the atom\\ $createSub(subInst\_r1(john,mary),4)$ becomes true,
which due to line 19 and line 36 makes $query(relB(john,mary),4)$
true. However now this cannot cause the atom $query(relA(mary),5)$ to become
true because line 15 cannot apply due to the constraint on the integer
argument of the $query$ atom. So what we have can be regarded as a partial
term substitution mechanism.

\subsubsection{Replacing skolem functions by a single constant}

Let us see how having term substitution as a derived effect via enlargement of
the space of abducibles rather than doing term substitution through an
explicit equality predicate allows us to better handle problems where the core
rules have existential variables but we do not wish to use skolem functions in
the abductive reasoning process. Recall that not having skolem functions
allows us to get full term substitution without the possiblity of infinitely
large answer sets. Consider the problem $,\langle R,q,U,C,N \rangle$ where $R$ is the
following input rule set: 

\begin{lstlisting}[frame=none]
relA(X):-relB(X,Y),relC(X,Y).
relB(X,Y):-relD(X,Y,Z),relE(X,Y,Z).
\end{lstlisting}

let our $q$ be $relA(john)$. Let the initial set of user provided facts be empty, furthermore, suppose that no instance of $relA$ or $relB$ may be abduced. Finally let the set $C$ be empty and let $N=4$.  
Consider the following encoding
\begin{lstlisting}[numbers=left]
max_ab_lvl(5).
% Encoding the goal
generate_proof(relA(john)).
query(X,0):-generate_proof(X).
goal:-holds(relA(john)).
:- not goal.



% Core rule translation
holds(relA(X)) :- holds(relB(X, Y)),holds(relC(X,Y)).
holds(relB(X,Y)) :- holds(relD(X,Y,Z)), holds(relE(X,Y,Z)).

% AG1_exp
createSub(subInst_r1(X,extVar),N+1) :- query(relA(X) ,N),max_ab_lvl(M),N<M-1.
createSub(subInst_r2(X,Y,extVar),N+1) :- query(relB(X,Y) ,N),max_ab_lvl(M),N<M-1.

explains(relB(X, Y), relA(X) ,N) :- createSub(subInst_r1(X,Y),N).
explains(relC(X, Y), relA(X) ,N) :- createSub(subInst_r1(X,Y),N).

explains(relD(X,Y,Z), relB(X,Y) ,N) :- createSub(subInst_r2(X,Y,Z),N).
explains(relE(X,Y,Z), relB(X,Y) ,N) :- createSub(subInst_r2(X,Y,Z),N).


% AG2_exp for rule 1

createSub(subInst_r1(X,V_Y),N) :- createSub(subInst_r1(V_X,V_Y),N), holds(relA(X)).
createSub(subInst_r1(X,Y),N) :- createSub(subInst_r1(V_X,V_Y),N), holds(relB(X,Y)).
createSub(subInst_r1(X,Y),N) :- createSub(subInst_r1(V_X,V_Y),N), holds(relC(X,Y)).

createSub(subInst_r1(X,V_Y),N) :- createSub(subInst_r1(V_X,V_Y),N), query(relA(X),L).
createSub(subInst_r1(X,Y),N) :- createSub(subInst_r1(V_X,V_Y),N), query(relB(X,Y),L).
createSub(subInst_r1(X,Y),N) :- createSub(subInst_r1(V_X,V_Y),N), query(relC(X,Y),L).


% AG2_exp for rule 2

createSub(subInst_r2(X,Y,V_Z),N) :- createSub(subInst_r2(V_X,V_Y,V_Z),N), holds(relB(X,Y)).
createSub(subInst_r2(X,Y,Z),N) :- createSub(subInst_r2(V_X,V_Y,V_Z),N), holds(relD(X,Y,Z)).
createSub(subInst_r2(X,Y,Z),N) :- createSub(subInst_r2(V_X,V_Y,V_Z),N), holds(relE(X,Y,Z)).


createSub(subInst_r2(X,Y,V_Z),N) :- createSub(subInst_r2(V_X,V_Y,V_Z),N), query(relB(X,Y),L).
createSub(subInst_r2(X,Y,Z),N) :- createSub(subInst_r2(V_X,V_Y,V_Z),N), query(relD(X,Y,Z),L).
createSub(subInst_r2(X,Y,Z),N) :- createSub(subInst_r2(V_X,V_Y,V_Z),N), query(relE(X,Y,Z),L).

% AG3_exp
query(X,N):-explains(X,Y,N),max_ab_lvl(M),N<M.
query(Y,N-1):-explains(X,Y,N),max_ab_lvl(M),N<M,0<N.

% Supporting code
{abducedFact(X)}:-query(X,N).
holds(X):-abducedFact(X).
holds(X):-user_input(pos,X).


:~abducedFact(Y).[1@1,Y]
:-abducedFact(relA(X)).
:-abducedFact(relB(X,Y)).

\end{lstlisting}

Note that in lines 15, 16 instead of using skolem functions we use a single
fresh constant $extVar$ to represent the existential variable in both
rules. Now, when we run the program we get the following optimal solution

\begin{lstlisting}[frame=none]
abducedFact(relC(john,extVar))
abducedFact(relE(john,extVar,extVar))
abducedFact(relD(john,extVar,extVar))    
\end{lstlisting}

Now because, term substitution is only a derived effect and there is no
equality relation, it is possible for different instances of $extVar$ to get
replaced (or not) by different constants upon the addition of some user
provided facts. For instance upon adding the fact $relD(john,james,mary)$, we get
the optimal solution:

\begin{lstlisting}[frame=none]
abducedFact(relC(john,james))
abducedFact(relE(john,james,mary))    
\end{lstlisting}

So some instances of $extVar$ from the original solution have been replaced by 'james' and others by 'mary'. What this means is that each occurence of $extVar$ in the original solution can be thought of as simply a place-holder for a term where each instance maybe a placeholder for a different term. When we use skolem functions instead this is simply more explicit because we have different skolem terms representing different existential variables. More formally in the first solution the variables $[X,Y,Z]$ get mapped to $[john, extVar,extVar]$ respectively. Upon the addition of the extra fact the we get the mapping $[X,Y,Z]\rightarrow[john,james,mary]$. Using an equality relation to get from the first solution to the second would be impossible because we would need both the following equalities to hold: $extVar = james$, $extVar = mary$. (Of course the above solution could be obtained if one simply grounds the rules over the entire domain of constants but as mentioned in the introduction, the methods in this paper are aimed at avoiding such a naïve grounding as in general, one may get too many substitutions for existential variables)

Given $<R,q,U,C,N>$, let this ASP program where we use $AG1_{exp}$ , $AG2_{exp}$, $AG3_{exp}$ but replace all use of skolem terms with $extVar$ be called $P_{<R,q,U,C,N>}^{semi-res}$.\\ We will now turn to the problem of generating a set of directed edges corresponding the computed abductive solution.

\subsection{Generating Justification Trees}

Given a source rule 
\begin{lstlisting}[frame=none]
pre_con_1(V1),pre_con_2(V2),...,pre_con_k(Vk),not pre_con_k+1(Vk+1),...,not pre_con_n(Vn) -> 
post_con(V).
\end{lstlisting}

For each positive pre-condition $pre\_con\_u(V_{u})$, we add the following ASP rule:

\begin{lstlisting}[frame=none]
causedBy(pos,pre_con_u(Vu), post_con(V),N+1):-holds(post_con(V)), holds(pre_con_1(V1)),
holds(pre_con_2(V2)),...,holds(pre_con_k(Vk)),not holds(pre_con_k+1(Vk+1)),...,
not holds(pre_con_n(Vn)),justify(post_con(V),N).   
\end{lstlisting}
For each negative precondition $pre\_con\_f(V_{f})$ we add the following ASP rule: 
\begin{lstlisting}[frame=none]
causedBy(neg,pre_con_f(Vf), post_con(V),N+1):-holds(post_con(V)), holds(pre_con_1(V1)),
holds(pre_con_2(V2)),...,holds(pre_con_k(Vk)),not holds(pre_con_k+1(Vk+1)),...,
not holds(pre_con_n(Vn)), justify(post_con(V),N).
\end{lstlisting}

\subsubsection{Supporting code for justification tree}
\mbox{}

\begin{lstlisting}[frame=none]
justify(X,N):-causedBy(pos,X,Y,N), not user_input(pos,X),N<M, max_graph_lvl(M).
directedEdge(Sgn,X,Y):-causedBy(Sgn,X,Y,M).

justify(X,0):-gen_graph(X),not user_input(pos,X).

directedEdge(pos,userFact,X):-directedEdge(pos,X,Y), user_input(pos,X).

directedEdge(pos,userFact,X):-gen_graph(X), user_input(pos,X).
\end{lstlisting}

\subsection{Discussion of Justification generation}

The intuition for the justification graph encoding is that given some user
provided facts $F$ and an input rule set $R$, an atom $a$ is only contained in
a stable model $M$ of $F$, $R$ if either $a$ is in $F$ or there exists some
rule $r$ in $R$ such that for some ground instantiation $r_{g}$ of $r$, all
the pre-conditions of $r_{g}$, (ie. the body atoms) are true in $M$ and the
post-condition (ie. head) of $r_{g}$ is $a$. Here the truth value of $NAF$
atoms is interpreted in the usual way. The edges for the justification graph
are calculated recursively. An atom $justify(h,k)$ represents the fact that
$holds(h)$ needs to be justified. If $r_{g}$ is a ground instantiantion of an
input rule where the post condition of $r_{g}$ is $h$ and all the
pre-conditions of $r_{g}$ are true then for every positive precondition
$b_{i}$ of $r_{g}$ we have the atom $causedBy(pos,b_{i},h,k+1)$ and for every
$NAF$ pre-condition $b_{j}$ we have the atom $causedBy(neg,b_{j},h,k+1)$. Then
if $k<M$, where we have $max\_graph\_lvl(M)$ for some integer value of $M$, we get the atoms $justify(b_{i},k+1)$, for every
positive pre-condition $b_{i}$ which is not a user provided fact. Finally each $causedBy$ atom generates a
$directedEdge$ atom, and these atoms are the set of directed edges
representing the justification graph.

\subsection{Some example executions}
Given the following program
\begin{lstlisting}[numbers=left]
gen_graph(relA(john)).
max_graph_lvl(5).

user_input(pos,relE(john,james,mary)).
user_input(pos,relD(john,james,mary)). 

holds(X):-user_input(pos,X).

holds(relA(X)) :- holds(relB(X, Y)),not holds(relC(X,Y)).
holds(relB(X,Y)) :- holds(relD(X,Y,Z)), holds(relE(X,Y,Z)).

causedBy(pos,relB(X,Y),relA(X),N+1):-holds(relA(X)),holds(relB(X, Y)),not holds(relC(X,Y)),justify(relA(X),N).
causedBy(neg,relC(X,Y),relA(X),N+1):-holds(relA(X)),holds(relB(X, Y)),not holds(relC(X,Y)),justify(relA(X),N).

causedBy(pos,relD(X,Y,Z),relB(X,Y),N+1):-holds(relB(X,Y)),
holds(relD(X,Y,Z)),holds(relE(X,Y,Z)),justify(relB(X,Y),N).
causedBy(pos,relE(X,Y,Z),relB(X,Y),N+1):-holds(relB(X,Y)),
holds(relD(X,Y,Z)),holds(relE(X,Y,Z)),justify(relB(X,Y),N).

justify(X,N):-causedBy(pos,X,Y,N), not user_input(pos,X),N<M, max_graph_lvl(M).
directedEdge(Sgn,X,Y):-causedBy(Sgn,X,Y,M).

justify(X,0):-gen_graph(X),not user_input(pos,X).

directedEdge(pos,userFact,X):-directedEdge(pos,X,Y), user_input(pos,X).

directedEdge(pos,userFact,X):-gen_graph(X), user_input(pos,X).
\end{lstlisting}
We get the following set of directed edges representing the justification graph. 
\begin{lstlisting}[frame=none]
directedEdge(pos,relB(john,james),relA(john))
directedEdge(pos,relE(john,james,mary),relB(john,james))
directedEdge(pos,relD(john,james,mary),relB(john,james))
directedEdge(neg,relC(john,james),relA(john))
directedEdge(pos,userFact,relD(john,james,mary))
directedEdge(pos,userFact,relE(john,james,mary))
\end{lstlisting}


\section{Simple Abductive Proof Generation Task}

We shall define here the notion of a
\textit{Simple Abductive Proof Generation Task}, as all of our formal results
will apply to this restricted class of abductive proof generation tasks.

\begin{definition}[Simple Abductive Proof Generation Task]\label{simpletask}
Given an abductive proof generation task $\langle R,q,U,C,N \rangle$, we say that this task is a $\textit{simple abductive proof generation task}$ if the following hold:
\begin{enumerate}
\item $R$ contains no negation as failure.
\item $R$ contains no function symbols, arithmetic operators.
\item No post condition of any rule in $R$ contains repeated variables. For example the rule:\\
$p(X,X):-r(X,X,Y)$ is not allowed but the rule: 

$p(X,Y):-r(X,X,Y)$ is allowed.
\item $C$ is empty.
\item Any constraint on abducibles in $U_{a}$ must consist of only a single
  positive fully un-ground atom with no repeated variables amongst its
  arguments. For example if $p$ is a binary predicate, then the constraint
  $:-abducedFact(p(X,Y)).$ is allowed but the constraint
  $:-abducedFact(p(X,X)).$ Constraints where more than one atom appears are
  also not allowed. For instance the following would be disallowed:
  $:-abducedFact(p(X,Y)),abducedFact(r(X)).$ Finally, constraints containing
  partially or fully ground atoms are disallowed. For example the following
  would be disallowed\\ $:-abducedFact(p(james,Y)).$
\item If $U_{a}$ is such that no instance of some predicate $p$ can be
  abduced, then $U_{f}$ must not contain any instantiation of $p$.
\item $q$ must be positive and fully ground.
\end{enumerate}
\end{definition}


\section{Finiteness and Completeness Properties of Simple Tasks}

\begin{theorem}[Finiteness]\label{thm:finiteness}
Assume that $\langle R,q,U,\emptyset,N \rangle$ is such that it is a simple abductive proof generation task. 
Then $P_{\langle R,q,U,\emptyset,N \rangle}^{res}$ cannot have infinite answer sets. 
\end{theorem}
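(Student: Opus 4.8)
## Proof Plan

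The plan is to show that every predicate symbol appearing in $P^{res}_{\langle R,q,U,\emptyset,N\rangle}$ can only be instantiated with finitely many ground terms, which suffices since a finite ground program has finitely many (finite) answer sets. The argument proceeds by tracking how terms can be built up during grounding. First I would observe that because the task is simple, the only non-constant function symbols that can ever appear are the Skolem symbols $\mathit{skolemFn\_j\_v}$ introduced in $AG1$, plus the finitely many fresh constants $v_i$, $\mathit{extVar}$, and whatever constants occur syntactically in $R$, $q$, $U_f$. So the whole question reduces to bounding the \emph{nesting depth} of Skolem terms that can occur in any $\mathit{createSub}$, $\mathit{explains}$, $\mathit{query}$, $\mathit{holds}$, or $\mathit{abducedFact}$ atom.

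The key step is a depth-counting invariant. I would prove by induction on the derivation that whenever $\mathit{query}(a,k)$ is derivable, the Skolem-nesting depth of $a$ is at most some function of $k$ — more precisely, that a fresh Skolem layer is wrapped around a term only when the integer argument strictly increases via the $AG1$ rule $\mathit{createSub}(\ldots,N{+}1) \leftarrow \mathit{query}(\ldots,N), N<M{-}1$, and that the integer arguments are capped by $\mathtt{max\_ab\_lvl}(M)$ with $M=N+1$. The subtle point, and the one the paper itself flags as the source of infinite models in the "naive" encoding, is the interaction of the $AG2$ rules
\begin{lstlisting}[frame=none]
create_subs(sub_Inst_t(M_(r,p)),M-1):-create_subs(sub_Inst_t(F_r),N),holds(p),max_ab_lvl(M), N<M.
\end{lstlisting}
with the feedback loop $\mathit{holds} \leftarrow \mathit{abducedFact} \leftarrow \mathit{query}$. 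Here I must check that although $AG2$ can substitute arbitrary terms coming from $\mathit{holds}$ atoms into a $\mathit{createSub}$, it always resets the integer argument to the constant $M-1$ rather than to $N+1$; hence the resulting $\mathit{query}$ atoms (via $AG3$, which requires $N<M$) sit at level $M-1$ and can never be fed back through the $AG1$ rule, whose guard is $N<M-1$. This is exactly where the non-expanded encoding differs from $AG2_{exp}$, and it is what breaks the unbounded recursion illustrated by the $\mathit{relA}(X)\leftarrow\mathit{relB}(X,Y),\mathit{relA}(Y)$ example. So the main obstacle is making this "the dangerous rules live at the ceiling level $M-1$ and therefore cannot re-trigger deepening" argument fully rigorous, accounting for the $\mathit{createSub}$-from-$\mathit{createSub}$ chains of $AG2$ (which preserve the level, so a chain started at $M-1$ stays at $M-1$) and for the mutual recursion with $\mathit{holds}$.

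Once the Skolem depth is bounded — say by $N$, since at most $N$ genuine deepening steps are possible before the level guard fails — the rest is routine: there are finitely many predicate symbols (those in $R$ plus the fixed meta-predicates), finitely many function symbols, finitely many constants, and a finite depth bound, so the Herbrand base restricted to terms actually reachable in the grounding is finite. A finite ground normal logic program with choice rules and weak constraints has only finitely many answer sets, each finite, which gives the theorem. I would also need to remark that the use of simplicity assumptions (3) — no repeated variables in post-conditions — and (2) — no function symbols in $R$ — are what guarantee that $AG1$'s Skolem term $\mathit{skolemFn\_j\_v}(V)$ has a well-defined finite arity and that no other term-building mechanism sneaks in; the constraint-form restriction (5) ensures the abducible-blocking constraints in $U_a$ do not themselves introduce new terms.
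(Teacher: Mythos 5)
Your proposal is correct and follows essentially the same route as the paper's proof: reduce finiteness to bounding the Skolem-nesting depth of terms, observe that only the $AG1$ rules create fresh terms while the $AG2$ rules place their output at the ceiling level $M-1$ and so cannot re-trigger the $AG1$ guard $N<M-1$, and conclude via finiteness of the set of bounded-depth terms together with the bounded integer arguments. The paper packages this as three short lemmas (finiteness of $T_k$, membership of all predicate arguments in $T_N$, and finiteness of answer sets), but the substance is identical to your depth-counting invariant.
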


The proof can be found in Section~\ref{sec:proof_finiteness}.



\begin{theorem}[Completeness]\label{thm:completeness}
  Given a simple abductive proof generation task, if there exists a general solution $S$ to that task then there
  exists a ASP solution $S_{ASP}$ to that task corresponding to an answer set of the ASP program $P_{\langle R,q,U,\emptyset,N\rangle}^{res}$.
\end{theorem}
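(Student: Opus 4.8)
The plan is to start from a general solution $S=F$ and build, by hand, an answer set of $P_{\langle R,q,U,\emptyset,N \rangle}^{res}$ in which \texttt{goal} is true; its set of \texttt{abducedFact} atoms will then be the required $S_{ASP}$. First I would exploit simplicity: since $R$ has no negation as failure, $F\cup U_f\cup R$ is a definite program whose unique answer set is its least model $\mathcal A$, and $F$ being a general solution (with $q$ ground, $C=\emptyset$) merely says $q\in\mathcal A$. So I fix a finite forward proof tree $T$ of $q$ from $F\cup U_f\cup R$ — root $q$, each internal node the post-condition of a ground rule instance whose preconditions are its children, each leaf in $F\cup U_f$ — and prune $F$ to the atoms occurring as leaves of $T$. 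This is still a general solution, and now every $a\in F$ satisfies $0\le\phi^{min}_{R,q}(a)\le N$: the upper bound is Definition~\ref{def:abductive_proof_generation_solution}(3), and the lower bound holds because a node at depth $d$ in $T$ carries an atom $a'$ with $d\in\phi_{R,q}(a')$, by a one-line induction on $d$ from Definition~\ref{depthlvl}.

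Second, I would show the abducible generator reaches far enough. Let $Q_{core}$ be the set of \texttt{query} atoms derivable from \texttt{generate\_proof}$(q)$, \texttt{max\_ab\_lvl}$(N{+}1)$ and the rules $AG1$, $AG3$ only; as none of these mention \texttt{holds} or \texttt{abducedFact}, $Q_{core}$ sits inside every answer set. The lemma to prove is: for every atom $x$ over the signature of $R$ with $\phi^{min}_{R,q}(x)=k\le N$ there is a Skolemised variant $\widehat x$ of $x$ — obtained by replacing, at the positions existentially bound along some length-$k$ backward chain for $x$, the subterms by the corresponding (possibly nested) Skolem terms — with \texttt{query}$(\widehat x,k)\in Q_{core}$. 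I prove this by induction on $k$ along such a chain $q=y_0,\dots,y_k=x$: the base is \texttt{query}$(q,0)\in Q_{core}$; for the step, since the post-condition of the stage-$i$ rule has no repeated variables (Definition~\ref{simpletask}(3)) and $R$ has no function symbols (Definition~\ref{simpletask}(2)), the ground atom $\widehat{y_i}$ matches that post-condition, so $AG1$ emits the matching \texttt{create\_subs} atom at level $i+1$, then the relevant \texttt{explains} atom, and $AG3$ then emits \texttt{query}$(\widehat{y_{i+1}},i+1)$, the depth guards of $AG1,AG3$ all holding because $i\le k-1\le N-1$. In particular every $a\in F$ gets a variant $\widehat a$ with \texttt{query}$(\widehat a,\phi^{min}_{R,q}(a))\in Q_{core}$, and so do the side-preconditions met along these chains.

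Third, I would assemble the model. Guided by $T$, but shortcutting each branch that leads to an abduced leaf $a$ (wherever that branch drops below depth $N$) by a length-$\phi^{min}_{R,q}(a)$ backward chain, I pick a set $\widehat F$ of Skolemised atoms: the $\widehat a$ for $a\in F$ together with the Skolemised side-preconditions of those chains whose predicate is \emph{not} forbidden by $U_a$, deriving further (rather than abducing) the ones whose predicate is forbidden — which is coherent because by Definition~\ref{simpletask}(6) forbidden-predicate atoms never occur in $U_f$, hence can only sit inside a proof, never at its leaves. I then declare \texttt{abducedFact} true of exactly $\widehat F$, \texttt{user\_input} true of the atoms of $U_f$, and let \texttt{holds}, \texttt{query}, \texttt{create\_subs}, \texttt{explains}, \texttt{goal} be the least interpretation closed under the remaining, NAF-free, rules. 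It remains to check this is a stable model meeting every constraint: the choice rule supports each $\widehat a$ since \texttt{query}$(\widehat a,\cdot)\in Q_{core}$; no abduced-fact constraint fires since $\widehat a$ has the predicate of $a$, $F$ respects $U_a$, and for a simple task each $U_a$-constraint bans a whole predicate (Definition~\ref{simpletask}(5)); \texttt{goal} holds because the Skolemised, shortcut version of $T$ is a genuine forward derivation of \texttt{holds}$(q)$ — here one uses that existential variables occur only in preconditions, never in post-conditions, so every Skolem substitution is confined to a single rule application and the bindings coming from abduced and from $U_f$ sub-proofs cannot clash; hence \texttt{:- not goal} is satisfied; and finiteness of the interpretation is immediate from Theorem~\ref{thm:finiteness}. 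Reading \texttt{abducedFact} off this answer set yields $S_{ASP}$.

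The third step, together with the bookkeeping behind the second, is where I expect the real difficulty. The Skolemised abduced atoms must be kept simultaneously (i) reachable by the depth-$N$-bounded backward chaining, which forces reading them off \emph{short} backward chains rather than off the (possibly far deeper) branches of $T$; (ii) assembled into one coherent forward proof of $q$; and (iii) clear of $U_a$-forbidden predicates. Making (i)--(iii) cohere is the crux: one must choose the proof structure with care — morally, one in which each branch to an abduced leaf $a$ has length exactly $\phi^{min}_{R,q}(a)$, with forbidden atoms relegated to internal strata as in $T$ — and track Skolem terms precisely, and it is exactly restrictions (1)--(3) and (6) of Definition~\ref{simpletask} that make the reconciliation possible. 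Everything else — the definite-program semantics, the constraint checks, finiteness — is routine.
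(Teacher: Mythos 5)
Your route is genuinely different from the paper's: you fix a concrete forward proof tree $T$ for $q$, prove a reachability lemma for the $AG1$/$AG3$-generated $query$ atoms (your $Q_{core}$ step, which is sound and captures the reachability content of the paper's argument), and then try to hand-assemble one stable model. The paper instead inducts on $N$: a general solution for $\langle R,q,U,\emptyset,k+1\rangle$ yields general solutions for the subproblems $\langle R,q_{pre_i},U,\emptyset,k\rangle$ given by the preconditions of a rule deriving $q$, and a separate lemma --- ASP-solvability is invariant under arbitrarily changing the arguments of the goal atom, which holds because post-conditions contain no repeated variables --- converts the inductively obtained ASP solutions for the $q_{pre_i}$ into ASP solutions for their Skolemised variants, whose union solves the original task.

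The genuine gap is in your third step, exactly where you locate ``the crux'' without discharging it. When a branch of $T$ ending in an abduced leaf $a$ is deeper than $N$, you graft in a length-$\phi^{min}_{R,q}(a)$ backward chain and thereby acquire \emph{new} proof obligations: the side-preconditions of that chain, which have nothing to do with $F$. Such a side-precondition can have a $U_a$-forbidden predicate and be underivable from $F\cup U_f\cup R$; your appeal to Definition~\ref{simpletask}(6) only says forbidden atoms are absent from $U_f$, not that forbidden atoms sitting on an arbitrary minimal chain admit derivations. Concretely, take $R=\{\,p(X)\leftarrow s(X),t(X);\ p(X)\leftarrow u(X);\ u(X)\leftarrow v(X);\ v(X)\leftarrow w(X);\ w(X)\leftarrow s(X)\,\}$, $q=p(c)$, $N=3$, with $p,t,u,v,w$ all banned in $U_a$ and $U_f=\emptyset$. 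Then $F=\{s(c)\}$ is a general solution ($\phi^{min}_{R,q}(s(c))=1$ via the first rule, and $s(c)$ derives $p(c)$ through the length-$4$ chain), your shortcut selects the chain through $p(X)\leftarrow s(X),t(X)$ and strands you with the banned, underivable $t(c)$ --- yet the theorem holds, since $query(s(c),1)$ is generated, $s(c)$ is abduced, and the \emph{original} depth-$4$ forward derivation establishes $goal$. The natural repair is to decouple abducibility (any query level suffices for the choice rule) from the forward derivation; but once decoupled you meet the complementary problem that the variant $\widehat a$ reachable by the minimal chain may carry Skolem terms at positions where $a$ carries constants that the deep branch of $T$ uses in universally quantified positions, so $\widehat a$ cannot be slotted into $T$. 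The paper's invariance lemma is precisely the ingredient that dissolves this tension, and your proposal has no counterpart to it.
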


The proof can be found in Section~\ref{sec:proof_completeness}.

Let us just comment on the results above in a slightly broader context . Firstly it is not difficult to see that the above results for finiteness and completeness hold for a slightly larger class of abductive proof generation tasks than the class of simple tasks. Namely we can in fact relax condition 7 in the definition of simple tasks, to allow $q$ to be un-ground or only partially ground. Call this class of abduction tasks $semi$-$simple$.  Also, the completeness result in fact holds if $P_{\langle R,q,U,\emptyset,N\rangle}^{res}$ is replaced with $P_{\langle R,q,U,\emptyset,N\rangle}^{semi-res}$. In summary what we have then is the following\\
Given a $\textit{semi-simple}$ task $\langle R,q,U,\emptyset,N\rangle$. $P_{\langle R,q,U,\emptyset,N\rangle}^{res}$, $P_{\langle R,q,U,\emptyset,N\rangle}^{semi-res}$, both enjoy the completeness and finiteness properties. However only $P_{\langle R,q,U,\emptyset,N\rangle}^{semi-res}$ supports full implicit term substitution whereas $P_{\langle R,q,U,\emptyset,N\rangle}^{res}$ only supports partial term substitution. We shall formulate and prove a formal result regarding term substitution for $P_{\langle R,q,U,\emptyset,N\rangle}^{semi-res}$ next.    


\section{Proof simplification using User Provided Facts}\label{sec:proof_simplification}
\begin{definition}[Abstract Proof Graph]
Given a rule set $R$ which does not contain NAF, a predicate $p$ and integer $n$ define the abstract proof graph
$G_{R,p,n}$ as follows. The nodes of $G_{R,p,n}$ is the set of $query$
predicates generated by the rules just by the rules $AG1$ and $AG3$, in the encoding $P_{\langle R,q,U,C,N\rangle}^{res}$ where in
$\langle R,q,U,C,N\rangle$, $q$ is $p(v1,v2..,vk)$ assuming $p$ has arity $k$,
$U$, $C$ are empty and $N = n$. The edge relation is defined as follows. Two
nodes $d1$, $d2$ are connected by a directed edge represented as $E(d1,d2)$ if
and only if, $d1$ represents a pre-condition of an input rule where $d2$ is
the post condition.
\end{definition}
So if $R'$ consisted of the rules:
\begin{lstlisting}[frame=none]
a(X):-b(X,Y),c(Y).
b(X,Y):-d(X,Y,Z).
\end{lstlisting}
Then $G_{R',a,2}$ is:
\begin{lstlisting}[frame=none]
E(query(b(v1,sk(v1)),1),query(a(v1),0)),
E(query(c(sk(v1)),1),query(a(v1),0)))
E(query(d(v1,sk(v1),sk'(v1,sk(v1))),2),query(b(v1,sk(v1)),1))
\end{lstlisting}
Here $sk$ $sk'$ are just abbreviations of the full skolem function names. Also we assume the order $[X,Y,Z]$ on variables in the second rule, and the order $[X,Y]$ on variables in the first rule. (Recall that when defining the abducible generation rules in section 3 we had an order on variables in a rule)
\begin{definition}[Abstract Instance Set]\label{absinst}
Given a rule set $R$, predicate $p$ and integer $n$, the corresponding Abstract Instance Set denoted corresponding to this triple denoted by  $I_{R,p,n}$ is the set of $create\_Sub$
predicates generated by the rules just by the rules $AG1$ and $AG3$, in the encoding $P_{\langle R,q,U,C,N\rangle}^{res}$ where in
$\langle R,q,U,C,N\rangle$, $q$ is $p(v1,v2..,vk)$ assuming $p$ has arity $k$,
$U$, $C$ are empty and $N = n$. 
\end{definition}
So for the example above the set $I_{R',a,2}$ is:\\ $createSub(subInst\_r1(v1,sk(v1)),1)$,\\ $createSub(subInst\_r2(v1,sk(v1), sk'(v1,sk(v1))),2)$ 

\begin{definition}[Minimal abstract Proof Graph]\label{minabsgraph}
Given an abstract proof graph $G_{R',p,N}$, construct the minimal proof graph $G_{R',p,N}^{min}$ as follows. Firstly for given an integer $k$, going from left to right, delete all duplicate nodes $query(a',k)$ such that $query(a,k)$ is already in the proof graph and $a'=a$. Next for going down the proof graph, for each $k\in {0,1...,N}$, delete the node $query(a,k)$, if there exists $l<k$, such that $query(a',l)$ is in the proof graph $a=a'$. This forms $G_{R',p,N}^{min}$. The edge relation is inherited from $G_{R,p,N}$ in the obvious way. 
\end{definition}
Note firstly that if for some $b,j$, $query(b,j)$ is in $G_{R',p,N}$, then there exists some $h\leq j$ such that $query(b,h)$ is in $G_{R',p,N}^{min}$. Secondly we have the following property:
\begin{lemma}
Given $query(a,k)$ in $G_{R',p,N}^{min}$, unless $k=0$, there exists $query(a',k-1)$ in $G_{R',p,N}^{min}$ such that we have 

$E(query(a,k),query(a',k-1))$.  
\end{lemma}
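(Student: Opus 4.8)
The plan is to trace the node $query(a,k)$ back to the query atom from which it was generated, and to show that this parent node cannot have been deleted during the minimisation that produces $G_{R',p,N}^{min}$.

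First I would record how query atoms at positive levels arise in $G_{R',p,N}$. Since this graph is built using only $AG1$ and $AG3$, and since the $createSub$-rule of $AG1$ raises the integer argument by exactly one while the $explains$-rules of $AG1$ and the $query$-rule of $AG3$ preserve it, the only way to obtain $query(a,k)$ with $k\ge 1$ is through a chain $query(h,k-1)\longrightarrow createSub(subInst_t(\cdot),k)\longrightarrow explains(a,h,k)\longrightarrow query(a,k)$, for some input rule $r$ having $a$ as a (skolemised) pre-condition and $h$ as the matching instantiated post-condition; and this is exactly what witnesses $E(query(a,k),query(h,k-1))$ in $G_{R',p,N}$. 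I fix such an $h$ together with the parent $query(h,k-1)\in G_{R',p,N}$.

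Next I would show $query(h,k-1)$ survives into $G_{R',p,N}^{min}$. Within-level deduplication never removes the atom $h$ from level $k-1$ altogether, so if $query(h,k-1)\notin G_{R',p,N}^{min}$ it can only be because $h$ already occurs at a strictly lower level, i.e. $query(h,l)\in G_{R',p,N}$ for some $0\le l<k-1$. I then re-fire the same rule instance of $r$ on $query(h,l)$: because the skolem terms attached to the pre-conditions of $r$ are a function of the variables of its post-condition alone, matching $h$ again produces the identical pre-condition atom $a$, giving successively $createSub(subInst_t(\cdot),l+1)$, $explains(a,h,l+1)$ and $query(a,l+1)$. Here I would check the depth guards: every query level occurring in $G_{R',p,N}$ is strictly below $M$ (where $max\_ab\_lvl(M)$ holds), so from $l\le k-2$ and $k\le M-1$ we get $l\le M-3$, whence $l<M-1$ and $l+1<M$; thus both the $AG1$ guard $N<M-1$ and the $AG3$ guard $N<M$ are satisfied, and $query(a,l+1)\in G_{R',p,N}$ with $l+1\le k-1<k$.

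Finally I would derive the contradiction: membership of $query(a,k)$ in $G_{R',p,N}^{min}$ forces $k$ to be the least level at which $a$ appears in $G_{R',p,N}$, which the atom $query(a,l+1)$ just contradicts. Hence $query(h,k-1)$ is not deleted; taking $a'=h$, the edge $E(query(a,k),query(a',k-1))$ holds in $G_{R',p,N}$ and, both endpoints surviving, is inherited by $G_{R',p,N}^{min}$, as required. The one delicate point — and the only place where the argument uses more than bookkeeping with the depth parameters — is the claim that re-firing $r$ at level $l$ reproduces exactly the pre-condition atom $a$ rather than merely an $a$-variant; this is precisely where the specific shape of the skolemisation map (skolem terms depending only on the post-condition variables) is needed.
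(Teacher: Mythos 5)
Your proposal is correct and follows essentially the same route as the paper's proof: take the parent $query(a'',k-1)$ in the full graph, suppose it was deleted because $a''$ occurs at some level $s<k-1$, re-derive $query(a,s+1)$ from there, and contradict the minimality of $k$ for $a$. Your version merely fills in the details the paper's sketch leaves implicit (the $AG1$/$AG3$ chain, the depth guards, and the fact that the skolem terms depend only on the post-condition variables, so re-firing the rule reproduces exactly $a$), which is a faithful elaboration rather than a different argument.
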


\begin{proof} \textit{(sketch)}. \\
Let $query(a'',k-1)$ be such that $E(query(a,k),query(a'',k-1))$ is an edge of $G_{R',p,N}$. Now if $query(a'',k-1)$ is not in $G_{R',p,N}^{min}$, then there exists $s<k-1$ such that $query(a'',s)$ is in $G_{R',p,N}^{min}$. Therefore $query(a'',s)$ is in $G_{R',p,N}$. Then it follows that $query(a,s+1)$ is in $G_{R',p,N}$. However this is a contradiction since $s+1<k$. Hence it must be the case that $query(a'',k-1)$ is in $G_{R',p,N}^{min}$. 
\end{proof}

For the example we have been considering, the minimal abstract proof graph is the same as the abstract proof graph.

\begin{definition}[Concrete Proof Graph]\label{def:concretegraph}
Given a minimal abstract proof graph $G_{R,p,n}^{min}$, and a substitution $\theta$ for
terms in the minimal abstract proof graph, define the concrete proof graph
$C_{R,p,n,\theta}$ to be the set of $query$ atoms obtained after doing the
substitution $\theta$ on the set of $query$ atoms in $G_{R,p,n}^{min}$. (We drop the $min$ from the notation for the concrete proof graph as we will always mean a substitution from the minimal abstract proof graph)
\end{definition}

Note that such a substitution $\theta$ need not be injective.

\begin{definition}[Parent node, child node, sibling node, descendant]
Given any concrete or abstract, minimal or non minimal proof graph $G$ and given two nodes $d1,d2$ in $G$, we say that $d2$ is a $\textit{parent}$ of $d1$ if we have $E(d1,d2)$. In such a case we say $d1$ is a $\textit{child}$ of $d2$. Given two nodes $d3$ and $d4$, we say $d3$ is a $\textit{sibling}$ node of $d4$, if there exists some node $d'$ such that $E(d3,d')$ and $E(d4,d')$. ($d4$ is then also a sibling node of $d3$) Given two nodes $d5$ and $d6$, we say $d5$ is a $\textit{descendant}$ of $d6$ if $E_{tr}(d5,d6)$ is true where $E_{tr}$ is the transitive closure of $E$. 
\end{definition}

\begin{definition}[Concrete Instance Set]
Given an abstract instance set $I_{R,p,n}$, and a substitution $\theta$ for
terms in the instance set, define the concrete instance set
$I_{R,p,n,\theta}$ to be the set of $createSub$ atoms obtained after applying the
substitution $\theta$ on the set of $createSub$ atoms in $I_{R,p,n}$, (which itself is associated to the full abstract proof graph rather than the minimal one).
\end{definition}

For our example consider the substitution $\theta =[v1\rightarrow john,sk(v1) \rightarrow extVar, sk'(v1,sk(v1))\rightarrow extVar]$. Then $C_{R',a,2,\theta}$ is :
\begin{lstlisting}[frame=none]
E(query(b(john,extVar),1),query(a(john),0)),
E(query(c(extVar),1),query(a(john),0)))
E(query(d(john,extVar,extVar),2),query(b(john,extVar),1))
\end{lstlisting}
$I_{R,p,n,\theta}$ = $createSub(subInst\_r1(john,extVar),1)$,\\ $createSub(subInst\_r2(john,extVar, extVar),2)$

\begin{definition}[Derived Substitution - T ]\label{def:derivesub}
Given some $G_{R,p,n}^{min}$, and an associated $C_{R,p,n,\theta}$, let $q_{c}$ be a $query$ atom from $C_{R,p,n,\theta}$. Let $S_{q_{c}}$ be the set of $query$ atoms in $G_{R,p,n}^{min}$, which upon application of the substitution $\theta$ give $q_{c}$. Now pick an atom $q_{o}$ from $S_{q_{c}}$. Now suppose $q_{c}$ is given by $query((t_{1},t_{2},...,t_{j}),k)$, $q_{o}$ is given by $query((e_{1},e_{2},...,e_{j}),k)$. Now consider an arbitrary query atom $q_{f}$ given by $query((a_{1},a_{2},...,a_{j}),k)$, which is such that the map $\psi$ mapping each $e_{i}$ to the corresponding $a_{i}$ is well defined. (ie. $\psi$ is not one-to-many). Then define the substituion $\phi = T(\theta, q_{c},q_{o},q_{f})$ on terms in $G_{R,p,n}^{min}$ by the following:\\ For a term $u$ in $G_{R,p,n}^{min}$ (meaning $u$ occurs as the argument of the predicate inside some $query$ atom), if $u$ is in the set $\{e_{1},e_{2},...,e_{j}\}$ then $\phi(u) = \psi(u)$, otherwise $\phi(u) = \theta(u)$.
\end{definition}
For instance going back to our example if we let $q_{c}$ be\\
$query(b(john,extVar),1)$ let $q_{o}$ be $query(b(v1,sk(v1)),1)$ and let $q_{f}$ be $query(b(john,james),1)$ then $\phi=T(\theta,q_{c},q_{o},q_{f})$ is the substitution $\theta =[v1\rightarrow john,sk(v1) \rightarrow james, sk'(v1,sk(v1))\rightarrow extVar]$. We now have the following theorem:

\begin{theorem}[Term substitution]\label{thm:termsub}
Consider the abductive proof generation task $\langle R,q,U,C,N\rangle$, and suppose that his task is a simple abductive proof generation task. Let $p$ be the predicate corresponding to $q$. Suppose $A$ is an answer set of $P^{semi-res}_{<R,q,U,C,N>}$ and let $A$ contain $C_{R,p,N,\theta}$ and $I_{R,p,N,\theta}$ . Now
say the atom $q_{c}$ $query(p_{i}(t_{1},t_{2},..,t_{j}),k)$ is in
$C_{R,p,N,\theta}$. Let $q_{o}$ be some query atom from the set $S_{q_{c}}$
given by $query((e_{1},e_{2},...,e_{j}),k)$. Now suppose, $q_{f}=
query((a_{1},a_{2},...,a_{j}),k)$ is an arbitrary $query$ atom such that the
map $\psi$ from the $e_{i}s$ to the $a_{i}s$ as described in the definition
above is well defined. Then upon adding the fact
$query(p_{i}(a_{1},a_{2},..,a_{j}),k)$ to $P^{semi-res}_{<R,q,U,C,N>}$,
the resulting program has an answer set $A'$ such that $A'$ contains
$C_{R,p,N,\phi}$ and $I_{R,p,N,\phi}$  where $\phi = T(\theta, q_{c}, q_{o}, q_{f})$
\end{theorem}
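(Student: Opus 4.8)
The plan is to exhibit the required answer set $A'$ explicitly and then verify the two containments by an induction over the abstract proof graph $G_{R,p,N}^{min}$.

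\textbf{Step 1: constructing $A'$.}
Since the task is simple, $R$ has no negation, so the only non-monotone parts of $P^{semi-res}_{<R,q,U,C,N>}$ are the choice rule generating $abducedFact$, the weak constraint, and the integrity constraints (the one forcing $goal$ and those of $U_a$). I would let $D$ be the set of $abducedFact$ atoms of the given answer set $A$, let $Q$ be the definite program obtained from $P^{semi-res}_{<R,q,U,C,N>}$ by deleting the choice rule and the weak constraint and adjoining the facts $\{\,abducedFact(d) : d\in D\,\}$ (so that $A$ is the least model of $Q$), and set $A'$ to be the least model of $Q\cup\{q_f\}$ with $q_f = query(p_i(a_1,\dots,a_j),k)$. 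Monotonicity of least models gives $A\subseteq A'$, hence $goal\in A'$ and $D$ violates no constraint of $U_a$; a routine inspection of the reduct (noting that the $abducedFact$ atoms of $A'$ are exactly $D$ and each is supported because its $query$ atom is already in $A\subseteq A'$) then shows $A'$ is an answer set of $P^{semi-res}_{<R,q,U,C,N>}\cup\{q_f\}$. It therefore suffices to prove $C_{R,p,N,\phi}\cup I_{R,p,N,\phi}\subseteq A'$, reading $\phi$ as acting by $\theta$ on any term of $I_{R,p,N}$ that does not occur in $G_{R,p,N}^{min}$.

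\textbf{Step 2: the injection step.}
By the definition of $T$, $\phi(q_o)=q_f$, and $A'$ already contains $C_{R,p,N,\theta}$ and $I_{R,p,N,\theta}$. Assuming $k>0$, let $r$ be the source rule and $\hat{\sigma}_r\in I_{R,p,N}$ the $createSub$ atom recording the instance of $r$ whose post-condition is the atom of $q_o$'s parent in $G_{R,p,N}^{min}$ and whose pre-condition $p_i$ is the atom of $q_o$; then $\sigma_r := \theta(\hat{\sigma}_r)$ is in $A'$. I would then fire, on the pair $(\sigma_r, q_f)$, the $AG2_{exp}$ rule built for the pre-condition occurrence of $p_i$ in $r$ from which $q_o$ was generated; this produces a fresh $createSub$ atom $\sigma'_r$, and the heart of the proof is to check $\sigma'_r=\phi(\hat{\sigma}_r)$. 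One does this slot by slot on the rule-instance tuple: a slot whose variable occurs in $p_i$ is re-bound from $q_f$, so it equals $\psi$ of the corresponding argument $e_m$ of $q_o$, which is $\phi(e_m)$ since $e_m\in\{e_1,\dots,e_j\}$; a slot whose variable does not occur in $p_i$ is copied from $\sigma_r$, so it equals $\theta$ of the corresponding abstract term, which is again $\phi$. The not-one-to-many hypothesis on $\psi$ is exactly what makes the first case well defined when a variable of $r$ occurs repeatedly in $p_i$. (When $k=0$, $q_o$ is the root and $q_f$ itself is the new fact, so this step is skipped.)

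\textbf{Step 3: propagation over $G_{R,p,N}^{min}$.}
From $\sigma'_r$, the $explains$-rules of $AG1$ yield, for each pre-condition of $r$, an $explains$ atom with $\phi$-substituted arguments; $AG3$ then produces the $\phi$-images of all siblings of $q_o$ at level $k$, and the extra clause of $AG3_{exp}$ produces the $\phi$-image of $q_o$'s parent at level $k-1$. Treating $q_f$ as a rule post-condition, the $createSub$-rules of $AG1$ regenerate, with $\phi$ applied, the whole part of $G_{R,p,N}^{min}$ below $q_o$, down to level $N$. I would then repeat Step 2 at $q_o$'s parent (whose post-condition is a pre-condition of its own parent rule, so the analogous $AG2_{exp}$ rule applies, using the $createSub$ atom already available via $I_{R,p,N,\theta}$), then at the grandparent, and so on up to the root, performing the downward regeneration from every node reached. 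Since, by the lemma preceding the theorem, every node of $G_{R,p,N}^{min}$ is reachable from $q_o$ by first ascending to the root and then descending, an induction along these lines gives $\phi(\nu)\in A'$ for every node $\nu$ of $G_{R,p,N}^{min}$ and $\phi(\hat{\sigma})\in A'$ for every $\hat{\sigma}\in I_{R,p,N}$, which is exactly $C_{R,p,N,\phi}\cup I_{R,p,N,\phi}\subseteq A'$.

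\textbf{Main obstacle.}
The hard part will be the slot-by-slot identity $\sigma'_r=\phi(\hat{\sigma}_r)$ of Step 2, which has to be re-established at every node of the ancestor chain. The dangerous configuration is that two distinct variables of $r$ carry the same abstract term---which can happen at depth $\ge 2$, because pre-conditions of source rules may contain repeated variables---with one of them occurring in $p_i$ and the other not; then one slot is refreshed from $q_f$ while the other, bearing the same abstract term, is merely copied from $\sigma_r$. Discharging this, whether through an invariant describing how abstract terms are shared among $q_o$, its siblings and its parent, or by supplying the missing $\phi$-images via further $AG2_{exp}$ derivations, is where the not-one-to-many hypothesis on $\psi$ and the structural restrictions of a simple task (notably that no post-condition repeats a variable) are genuinely used, and it is the step that will require the most care.
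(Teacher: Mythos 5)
Your proposal is correct and follows essentially the same route as the paper: combine the newly added $query$ fact with the $createSub$ atoms already present via $I_{R,p,N,\theta}$ using the $AG2_{exp}$ clauses to obtain the $\phi$-transformed rule instance, check the identity $\sigma'_r=\phi(\hat\sigma_r)$ slot by slot, and then propagate upward to the parent via the extra $AG3_{exp}$ clause and downward via $AG1$/$AG3$ (the paper organizes this as an induction on the level $k$ rather than an explicit ascend-then-descend traversal, and handles your ``dangerous configuration'' by splitting the terms of $q_o$ into universal and existential ones and showing the latter occur only in descendants of $q_o$'s parent in the minimal graph). Your Step 1, which constructs $A'$ explicitly as a least model and verifies answer-set-hood, is a useful addition that the paper leaves implicit.
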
 

The proof can be found in Section~\ref{sec:proof_term_substitution}.

\begin{corollary}[Adding facts]\label{thm:addfact}
Given the simple abductive proof generation task $\langle R,q,U,C,N\rangle$, let $p$ be the predicate corresponding to $q$. Suppose $A$ is an answer set of $P^{semi-res}_{<R,q,U,C,N>}$ and let $A$ contain $C_{R,p,N,\theta}$ and $I_{R,p,N,\theta}$. Now say the atom $q_{c}$ $query(p_{i}(t_{1},t_{2},..,t_{j}),k)$ is in $C_{R,p,N,\theta}$. Let $q_{o}$ be some query atom from the set $S_{q_{c}}$ (The set of pre-images of $q_{c}$ in $G_{R,p,N}^{min}$) given by $query((e_{1},e_{2},...,e_{j}),k)$. Now suppose, $h_{f}= holds((a_{1},a_{2},...,a_{j}))$ is an arbitrary $holds$ atom such that the map $\psi$ from the $e_{i}s$ to the $a_{i}s$ as described in the earlier definition of derived substitution is well defined. Then upon adding the fact $holds(p_{i}(a_{1},a_{2},..,a_{j}))$ to $P^{semi-res}_{<R,q,U,C,N>}$, the resulting program has an answer set $A'$ such that $A'$ contains $C_{R,p,N,\phi}$ and $I_{R,p,N,\phi}$ , where $\phi = T(\theta, q_{c}, q_{o}, q_{f})$, where $q_{f} = query(p_{i}(a_{1},a_{2},..,a_{j}),k)$. 
\end{corollary}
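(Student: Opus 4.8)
The plan is to reduce the statement to Theorem~\ref{thm:termsub} by exploiting a structural parallelism built into $AG2_{exp}$: for every pre-condition and post-condition $p$ of every rule $r$, $AG2_{exp}$ contains a rule that derives a $createSub$ atom for $r$ from another $createSub$ atom for $r$ together with the literal $query(p,L)$, and this rule is paired with a twin having the \emph{same} head and the same first body literal in which $query(p,L)$ is replaced by $holds(p)$. First I would re-open the proof of Theorem~\ref{thm:termsub} and pin down exactly how the fact $q_f = query(p_i(a_1,\dots,a_j),k)$ added there is used in the derivation of $C_{R,p,N,\phi}$ and $I_{R,p,N,\phi}$: apart from possibly becoming an abducible through the choice rule, it is consumed only as the literal $query(p,L)$ of the $query$-flavoured $AG2_{exp}$ rules, where it combines with a $createSub$ atom of $I_{R,p,N,\theta}$ to spawn a $\psi$-renamed $createSub$ atom, and this renaming then propagates through $explains$ and $AG3_{exp}$ to reproduce the whole abstract proof graph and instance set under the substitution $\phi$, i.e.\ to yield $C_{R,p,N,\phi}$ and $I_{R,p,N,\phi}$.

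Second, I would note that adding $holds(p_i(a_1,\dots,a_j))$ as a fact puts this atom into every answer set of the augmented program, and that, since the task is simple ($R$ has no NAF and $C$ is empty), this augmentation is monotone and safe: it can only enlarge the set of derivable $holds$ atoms and the space of $query$ atoms offered to the choice rule, it cannot make $goal$ underivable (extra $holds$ atoms only help), and it cannot force an abducible constraint in $U_a$ to be violated, since $abducedFact$ atoms come only from the choice rule. I would then replay the derivation isolated in the first step, replacing each use of a $query$-flavoured $AG2_{exp}$ rule by its $holds$-flavoured twin -- legitimate because $holds(p_i(a_1,\dots,a_j))$ is now a fact -- and obtain exactly the same $createSub$, $explains$ and $query$ atoms, hence $C_{R,p,N,\phi}\subseteq A'$ and $I_{R,p,N,\phi}\subseteq A'$ for the resulting answer set $A'$; consistency of $A'$ would then follow from the monotone-and-safe observation together with the existence of the corresponding answer set in Theorem~\ref{thm:termsub}.

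The step I expect to be the main obstacle is checking that the extra effects of a $query$ atom that a $holds$ atom lacks are irrelevant here. A freshly added $query(p_i(\dots),k)$ atom can also fire the $AG1$ rule of a rule whose head predicate is $p_i$ (when $k<M-1$), creating a $createSub$ atom, which a $holds$ fact cannot do. I would argue this is harmless: that $createSub$ atom is the image, under the renaming realised by the $AG2_{exp}$ propagation, of a $createSub$ atom already generated by $AG1$ and $AG3$ from the backbone atom $q_o$, so it is already a member of $I_{R,p,N,\phi}$ and the $holds$-based derivation recovers it through $AG2_{exp}$ rather than through $AG1$; and if the proof of Theorem~\ref{thm:termsub} turns out not to invoke this $AG1$ effect of $q_f$ at all, the reduction is immediate. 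A smaller point to verify is that $q_f$ occurs in Theorem~\ref{thm:termsub} only to name the target of the derived substitution $T(\theta,q_c,q_o,q_f)$, so switching from adding $query(p_i(a_1,\dots,a_j),k)$ to adding $holds(p_i(a_1,\dots,a_j))$ leaves $\phi$, and therefore the claimed $C_{R,p,N,\phi}$ and $I_{R,p,N,\phi}$, unchanged.
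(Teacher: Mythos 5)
Your proposal is correct in substance and rests on the same key observation as the paper --- that every query-flavoured rule of $AG2_{exp}$ has a holds-flavoured twin with the same head --- but you execute the reduction to Theorem~\ref{thm:termsub} differently, and the paper's route dissolves the obstacle you flag at the end. The paper does not replay the theorem's derivation with twins substituted at every use site. Instead it uses a holds-flavoured $AG2$ rule exactly once: taking $i_{o}$ to be the abstract instance-set atom whose rule instantiation has $q_{o}$ as a precondition, the instance $tr_{\phi}(i_{o})\leftarrow tr_{\theta}(i_{o}),\,h_{f}$ fires (its body holds because $tr_{\theta}(i_{o})\in I_{R,p,N,\theta}\subseteq A$ and $h_{f}$ is now a fact), and then one application of an $explains(\ldots)\leftarrow createSub(\ldots)$ rule followed by the $AG3$ rule $query(X,N)\leftarrow explains(X,Y,N)$ derives the atom $q_{f}=query(p_{i}(a_{1},\ldots,a_{j}),k)$ itself inside the program (using the $query(Y,N-1)$ variant of $AG3$ when $q_{o}$ corresponds to a post-condition). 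Since $q_{f}$ is then actually present in the answer set, Theorem~\ref{thm:termsub} applies as a black box, and every effect of $q_{f}$ --- including the $AG1$ firings that you correctly note a bare $holds$ atom cannot reproduce --- comes for free. Your version, which swaps the holds-twin for the query-twin throughout the theorem's proof, can be made to work, but it obliges you to argue separately that the $AG1$ consequences of $q_{f}$ are recovered via $AG2_{exp}$; that is the one genuinely incomplete step in your sketch, and the paper's one-step derivation of $q_{f}$ from $h_{f}$ makes it unnecessary.
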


The proof can be found in Section~\ref{sec:proof_term_substitution}.

The preceding theorem and corrolary correspond to the notion of full implicit term substitution which we discussed earlier. For example going to our main example in this section. Let $R$ be the set of rules:\begin{lstlisting}[frame=none]
a(X):-b(X,Y),c(Y).
b(X,Y):-d(X,Y,Z).
\end{lstlisting}

Let $q$ be $a(john)$, let the set of user provided facts be empty and suppose that no instances of the predicate $a$ or $b$ can be abduced. Finally suppose that the set $C$ is empty and $N=2$. Then upon running the ASP program $P_{\langle R,q,U,C,N\rangle}^{semi-res}$ we will get the minimal abductive solution \begin{lstlisting}[frame=none]
abducedFact(c(extVar)), abducedFact(d(john,extVar,extVar))
\end{lstlisting} 
This corresponds to the substitution $\theta =[v1\rightarrow john,sk(v1) \rightarrow extVar, sk'(v1,sk(v1))\rightarrow extVar]$. Then upon modifying the set of user provided facts by adding the fact $c(james)$, we get the smaller abductive solution:
\begin{lstlisting}[frame=none]
abducedFact(d(john,james,extVar))
\end{lstlisting} which corresponds to the substitution $\theta =[v1\rightarrow john,sk(v1) \rightarrow james, sk'(v1,sk(v1))\rightarrow extVar]$. 

As was the case for the finiteness and completeness results, it is in fact the case the theorem and corollary proved in the previous section hold for the slightly larger class of abductive proof generation tasks which we called semi-simple. Overall, we have the following results. Given a semi-simple task $\langle R,q,U,\emptyset,N\rangle$, if no rule in $R$ contains existential variables, then to solve this task we can use the encoding $P_{\langle R,q,U,\emptyset,N\rangle}^{exp}$, which enjoys the completeness, finiteness and full term substitution properties. If $R$ does contain existential variables then we can either use the encoding $P_{\langle R,q,U,\emptyset,N\rangle}^{res}$ which enjoys the properties of completeness and finiteness but only gives us partial term substitution or we can use $P_{\langle R,q,U,\emptyset,N\rangle}^{semi-res}$ which enjoys the properties of completeness, finiteness and full term substitution.


\section{ Conclusions}\label{sec:conclusion}
We have presented several encodings for abductive proof generation in ASP,
incorporating notions of depth control and  novel implementations of term
substitution. We have also given an encoding that allows one to generate a set
of directed edges representing a justification graph.

It seems to us that some of the ideas involved in the term substituion
mechanism are similar to the ideas involved when one uses $\textit{Sideways
  Information Passing Strategies}$ \cite{beeri91} to re-write datalog rules
for more efficient evaluation of queries by incorporating elements of top-down
reasoning. However we have not explored this connection in detail. Those techniques 
typically involve a complete re-write of the input
rules according some chosen fixed sideways information passing strategy, which
makes that whole approach quite different to ours. \cite{DBLP:journals/jar/Stickel94} describes an approach 
to doing abductive reasoning in a bottom up manner. He uses 'continuation predicates' to pass substitutions from previously evaluated rule pre-conditions to rule pre-conditions yet to be evaluated, given the rule post-condition as the 'goal'. This somewhat resembles our use of 'createSub' predicates. However it seems to us that
that approach 
imposes a strict order on the evaluation of preconditions of a rule, which makes 
that method much less general than ours. 

There are several possible directions for future work. One possible line of theoretical investigation 
could be to study how the abductive solutions calculated by our methods (and any resulting extra
consequences of the abductive solution and input rules) could be generalised to 
sentences in first order logic. Roughly speaking, given an abductive solution involving
instances of 'extVar' the aim would be to map these solutions to solutions where instances
of 'extVar' are replaced by universally quantified variables (where perhaps such a variable may
not take values from some finite set).  Distinguishing between instances of 'extVar' that should 
get mapped to distinct universally quantified variables can be done by adding certain facts 
that would result in the generated abductive solution being modified so that the 'matching' 
occurrences of 'extVar' get replaced by some other fresh constant. Intuitively, it seems to 
us that our method of calculating and simplifying abductive solutions without grounding over
the entire domain of constants gives an appropriate 
setting to explore some of these ideas. Of course the correctness/applicability of such techniques 
would have to be investigated in a rigorous and formal manner, see
Section~\ref{sec:generalisation_sol} for a further discussion.

Another possible future line of work may include extending the formal results
presented here to a larger class of abductive proof generation problems. 
It also seems to us that the main technique used to generate the directed edge
set representing the justification graph could be adapted for use in SAT/SMT
solvers to get justifications out of them. 

We also have yet to study the complexity problems
associated with the methods presented in this paper. \cite{DBLP:journals/tcs/EiterGL97} provides a thorough study 
of the complexity of abductive reasoning. It remains to be seen how those results, 
many of which deal with propositional logic, could be carried over to our setting, where
we aim to compute abductive solutions without complete grounding of rules. 


\begin{acks}
This research is supported by the National Research Foundation (NRF),
Singapore, under its Industry Alignment Fund -- Pre-Positioning Programme, as
the Research Programme in Computational Law. Any opinions, findings and
conclusions or recommendations expressed in this material are those of the
authors and do not reflect the views of National Research Foundation,
Singapore.
\end{acks}
  
\bibliographystyle{ACM-Reference-Format}
\bibliography{main_extended}

\newpage
\appendix
\section{Extra Material}

\subsection{Proof of Finiteness}\label{sec:proof_finiteness}

We will prove the finiteness result of Theorem~\ref{thm:finiteness} via a series of lemmas. Also we will refer to specific lines of the encoding given below but the arguments in the proof are fully general and can be easily extended to the encoding of any abductive proof generation problem

\begin{lstlisting}[numbers=left]
max_ab_lvl(5).
query(relA(bob),0).
goal:-holds(relA(bob)).
:-not goal.


holds(relA(P)) :- holds(relB(P, R)),holds(relD(R)).
holds(relB(P, R)) :- holds(relA(R)), holds(relC(P)).

explains(relB(P, R), relA(P) ,N) :- createSub(subInst_r1(P,R),N).
explains(relD(R), relA(P) ,N) :- createSub(subInst_r1(P,R),N).


createSub(subInst_r1(P,skolemFn_r1_R(P)),N+1) :- query(relA(P) ,N),max_ab_lvl(M),N<M-1.
createSub(subInst_r2(P,Q),N+1) :- query(relB(P, Q) ,N),max_ab_lvl(M),N<M-1.


explains(relA(R), relB(P,R) ,N) :- createSub(subInst_r2(P,R),N).
explains(relC(P), relB(P,R) ,N) :- createSub(subInst_r2(P,R),N).



createSub(subInst_r1(P,R),M-1) :- createSub(subInst_r1(V_P,V_R),N), holds(relB(P, R)),max_ab_lvl(M).
createSub(subInst_r1(V_P,R),M-1) :- createSub(subInst_r1(V_P,V_R),N), holds(relD(R)),max_ab_lvl(M).

createSub(subInst_r2(V_P,R),M-1) :- createSub(subInst_r2(V_P,V_R),N), holds(relA(R)),max_ab_lvl(M).

createSub(subInst_r2(P,V_R),M-1) :- createSub(subInst_r2(V_P,V_R),N), holds(relC(P)),max_ab_lvl(M).


query(X,N):-explains(X,Y,N),max_ab_lvl(M),N<M.
{abducedFact(X)}:-query(X,N).
holds(X):-abducedFact(X).
holds(X):-user_input(pos,X).
\end{lstlisting}

\begin{lemma}
Let $S$ be the set of uninterpreted skolem functions appearing in the abducible
generation encoding. Let $C$ be the set of constants occurring in either a user
inputed fact in $U$ or the query $q$. Let $T_{k}$ denote the set of unique
terms that can be constructed from $S$ and $C$ with skolem depth at most
$k$. Let $T$  be the possibly infinite set consisting of terms of unbounded
depth. Then for any positive integer $k$, $T_{k}$ is finite.
\end{lemma}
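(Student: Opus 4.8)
The plan is to prove this by induction on $k$, using the fact that a finitely-branching tree of finite height has finitely many nodes. First I would set up notation: let $|S| = s$ be the (finite) number of distinct uninterpreted skolem functions appearing in the encoding, say $f_1, \dots, f_s$ with arities $\mathrm{ar}(f_1), \dots, \mathrm{ar}(f_s)$, and let $|C| = c$ be the (finite) number of constants occurring in a user-provided fact in $U$ or in the query $q$. Every element of $T_k$ is a ground term built from these symbols with skolem-nesting depth at most $k$.

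For the base case, $T_0$ is just the set of terms of skolem depth $0$, which is exactly the set of constants $C$ (one might also include depth-$0$ applications of nullary skolem functions, if any exist — finitely many either way), so $|T_0| \le c$ is finite. For the inductive step, suppose $T_{k-1}$ is finite, with $|T_{k-1}| = m$. Any term of skolem depth at most $k$ is either a term of depth at most $k-1$ (hence one of the $m$ elements of $T_{k-1}$), or it is of the form $f_i(u_1, \dots, u_{\mathrm{ar}(f_i)})$ where each argument $u_j$ has skolem depth at most $k-1$, i.e. each $u_j \in T_{k-1}$. The number of such terms is at most $\sum_{i=1}^{s} m^{\mathrm{ar}(f_i)}$, which is finite since $s$, $m$, and all the arities are finite. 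Hence $|T_k| \le m + \sum_{i=1}^{s} m^{\mathrm{ar}(f_i)}$ is finite, completing the induction.

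The only genuinely delicate point — and the one I would be careful to state precisely rather than the actual arithmetic, which is routine — is the definition of "skolem depth" and the claim that every term occurring syntactically in $T_k$ is indeed generated by finitely many function symbols and constants. Here I would appeal to the structure of the encoding: the skolem function symbols are introduced only in the $AG1$ translation, one batch per input rule, and each is a fixed symbol of fixed arity determined by the number of universally quantified variables in that rule's post-condition; since $R$ is finite, only finitely many such symbols arise. Likewise the only constants that can appear are those in $q$ and in $U_f$ (there are no function symbols or arithmetic in a simple task, so no new constants are synthesised), together with the fresh placeholder constants $v_1, v_2, \dots$ introduced for an un-ground query — but for a simple task $q$ is ground, so these do not occur; in any case there are finitely many. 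I expect no real obstacle here beyond bookkeeping, and this lemma is really the combinatorial foundation on which the subsequent bounding of $\mathit{createSub}$, $\mathit{explains}$, and $\mathit{query}$ atoms (and hence the full finiteness theorem) will rest.
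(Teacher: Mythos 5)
Your induction is exactly the paper's argument: the paper likewise notes $|T_0|=|C|$ is finite and bounds $|T_{k+1}|\leq B + lB^{d}$ where $l$ is the number of skolem functions and $d$ the maximum arity, which is just a coarser form of your sum $\sum_{i=1}^{s} m^{\mathrm{ar}(f_i)}$. The proposal is correct and takes essentially the same route, with your added bookkeeping about where the symbols come from being harmless extra detail.
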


\begin{proof}
This is a standard result which follows easily by induction. Clearly $|T_{0}|$ = $|C|$, which is finite. Let $|T_{k}|$ = $B$ be finite. Let $S$ be a set of $l$ functions having arity at most $d$. Then $|T_{k+1}|\leq B + lB^{d}$. Hence $T_{k+1}$ is finite.  
\end{proof}

\begin{lemma}
 For any predicate $p$ inside an $abducedFact$ atom the arguments of $p$ are elements of the set $T_{N}$. Similarly for any predicate $p$ inside an $holds$ atom the arguments of $p$ are elements of the set $T_{N}$. Similarly for the other atoms in any answer set of $P_{\langle R,q,U,C,N \rangle}^{res}$. Terms that correspond to arguments of predicates occuring in the input rules are always elements of $T_{N}$.
\end{lemma}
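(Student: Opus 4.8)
The plan is to strengthen the statement into an invariant that couples the \emph{skolem depth} of every term carried by a meta-atom to that atom's integer argument, and then to prove this invariant by induction on the least-fixpoint rank of atoms in the reduct $P^{A}$ of $P_{\langle R,q,U,C,N\rangle}^{res}$ with respect to an arbitrary answer set $A$ (since $C=\emptyset$; working with the reduct, which is a definite program, lets us ignore the negation-as-failure literals and treat the choice rule $\{abducedFact(X)\}\!:\!-query(X,k)$ as if it supplied $abducedFact(X)$ from $query(X,k)$). Writing $M=N+1$, the invariant I would carry is: for every $query(t,k)\in A$ we have $0\le k\le N$ and every term in $t$ has skolem depth $\le k$; for every $createSub(subInst\_j(w),k)\in A$ we have $1\le k\le N$ and every term in the tuple $w$ has skolem depth $\le k$; for every $explains(x,y,k)\in A$ the terms of $x$ and of $y$ have skolem depth $\le k\le N$; and for every $holds(u)\in A$ or $abducedFact(u)\in A$ every term of $u$ has skolem depth $\le N$. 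Granting this, the lemma is immediate: by the simple-task hypothesis $R$ contains no function symbols of its own, so the only function symbols occurring anywhere are the encoding's skolem functions, every term in $A$ therefore lies in $T_{N}$, which is finite by Lemma~1; and the terms that occur as arguments of instantiated input-rule predicates are exactly those appearing inside $holds$ atoms (forward translation) and inside the $subInst\_j$ tuples of $createSub$/$explains$/$query$ atoms (the reversed rules), hence also in $T_{N}$.

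For the base case, the program's explicit facts ($max\_ab\_lvl(M)$, $generate\_proof(q)$, the $user\_input$ facts, the facts of the task) carry only constants drawn from $q$ and $U$, of skolem depth $0$, and $query(q,0)$ matches the invariant at $k=0$. For the inductive step I would run through the rule schemas. The defining rule of $AG1$, $createSub(subInst\_j(V_{sk}),k+1)\!:\!-query(post\_con(V),k),\ k<M-1$, binds the post-condition variables to terms of skolem depth $\le k$ and each existential variable to a $skolemFn\_j$-term built over those, of skolem depth $\le k+1$; since $k<M-1=N$ the head integer is $k+1\le N$ and the depth-to-integer coupling is preserved exactly. The $explains$ rules of $AG1$ merely re-package sub-terms of the triggering $createSub$ tuple with the same integer; $AG3$, $query(X,k)\!:\!-explains(X,Y,k),\ k<M$, copies the first component of an $explains$ atom and enforces $k\le N$; both preserve the invariant. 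The forward-translation rules (and the translations of constraints in $C$, which is empty here) move terms only from positive body $holds$ atoms up to the head $holds$ atom, and by the standing assumption $V\subseteq V_{1}\cup\cdots\cup V_{n}$ (together with the requirement that every variable of a NAF precondition occurs in a positive one) the head introduces no new term, so $holds$ heads inherit skolem depth $\le N$; likewise $holds(X)\!:\!-abducedFact(X)$, $holds(X)\!:\!-user\_input(pos,X)$, and the choice rule carry terms of skolem depth $\le k\le N$.

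The step I expect to be the crux is the $AG2$ family $createSub(subInst\_j(M_{r,p}),M-1)\!:\!-createSub(subInst\_j(F_{r}),k),\ holds(p),\ k<M$, whose head carries the \emph{maximal} integer $M-1=N$ and so could a priori re-inflate skolem depth. The key observations are that $AG2$ applies \emph{no} skolem function: it fills each position of $M_{r,p}$ either from the corresponding component of $F_{r}$ (skolem depth $\le k\le N$ by induction) or from a variable of $p$ bound through $holds(p)$ (skolem depth $\le N$ by the $holds$ clause of the invariant), so the produced $createSub$ still has all terms of skolem depth $\le N$, matching its integer $N$; and, crucially, any $query$ atom later obtained from such an $AG2$-generated $createSub$ (via $explains$ and $AG3$) has integer $N$, for which the guard $k<M-1=N$ of $AG1$ fails, so it can trigger no further $skolemFn$-nesting. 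This is exactly the mechanism, flagged in the body of the paper, by which replacing the integer argument by $M-1$ in the head of the substitution rules caps the skolem depth. I would close by noting that the same accounting shows every integer argument occurring in $A$ lies in $\{0,\dots,N\}$, a fact re-used in the remaining lemmas of the finiteness proof.
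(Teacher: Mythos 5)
Your proof is correct and follows essentially the same route as the paper's (sketched) argument: both hinge on the observation that only the $AG1$ rules introduce fresh skolem terms, that their guard $N<M-1$ ties each new nesting level to an increment of the integer argument, and that the $AG2$ substitution rules place $M-1=N$ in their heads so the $query$ atoms they spawn can never re-trigger $AG1$. Your explicit depth-to-integer invariant, proved by induction on fixpoint rank of the reduct, is a more rigorous rendering of what the paper leaves as an informal sketch, but it is the same idea.
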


\begin{proof}\textit{(sketch)}. Firstly note that for any predicate $p$ inside a
$abducedFact$ or $holds$ atom, where $p$ comes from the input rule set, the
arguments of $p$ belong to the (possibly infinite) set $T$. We can see this by observing that
since no rule in $R$ contains a function symbol, the encoding of the rules
themselves such as in line 7,8 of the ASP program above, cannot introduce new
terms in the arguments of predicates inside $holds$ atoms. Similarly lines
like 23, 24 etc. also cannot introduce new terms inside $holds$, $create\_sub$
atoms in the final answer set, which means also that no new terms appear as
predicate arguments inside $abducedFact$ atoms.

Only lines like 14,15 are able to create fresh terms that become arguments of
various atoms in the answer set. But then it follows that any $holds$ atom,
$create\_sub$, $abducedFact$, $query$, $explains$ atoms can only have input
rule predicate arguments from the set $T$.

Next we show that the skolem depth of these terms is at most $N$. Any fresh
term $t$ from $T$ appearing in an atom in an answer set of the ASP program,
such that $t$ does not appear in any user provided fact or in the original
query $q$, must have been constructed from rules like in lines 14,15. But
these encode a 'static' space of abducibles, which unlike lines 23, 24 is
independent of the forward reasoning component or any user provided additional
facts. This is because any $query$ atom that gives an instantiation of the
right hand side of rules like the one in line 14, must have integer argument
less than $M-1=N$, (recall that we have $M=N+1$), whereas rules like the one
in line 24, gives us $query$ atoms with integer argument $M-1$. It is not hard
to see that because of this fresh terms created via ASP rules like the one on
line 14 must have skolem depth at most $N$. Therefore all terms inside atoms
of the answer-set that correspond to input rule predicate arguments must
belong to the set $T_{N}$.
\end{proof}

\begin{lemma}
Since the set $T_{N}$ is finite it follows that any answer set of $P_{\langle
  R,q,U,C,N \rangle}^{res}$ is finite.
\end{lemma}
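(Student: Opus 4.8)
The plan is to reduce the claim to the standard observation that any answer set of an ASP program is a subset of the set of ground atoms that can possibly occur in a model of that program, and then to show that, under the hypotheses of Theorem~\ref{thm:finiteness}, this latter set is finite. First I would enumerate the finitely many (meta-)predicate symbols occurring in $P^{res}_{\langle R,q,U,C,N\rangle}$ — namely $holds$, $query$, $explains$, $createSub$ (together with the finite family of $subInst$ functors, one per input rule), $abducedFact$, $generate\_proof$, $goal$, $max\_ab\_lvl$, $user\_input$, and the integrity-constraint heads — and note that each is assigned a fixed, syntactically determined arity by the translation, so it suffices to bound the set of values that can occupy each argument position.

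Next I would carry out that bounding by a short case analysis on the kind of argument position. Positions that carry an input-rule atom (the argument of $holds$, $abducedFact$ and $generate\_proof$, the first two arguments of $explains$, and the first argument of $query$) can, by the preceding lemma, only contain input-rule predicates applied to terms of $T_N$; since there are finitely many input-rule predicates of bounded arity and $T_N$ is finite, only finitely many such atoms exist. Positions that carry a $subInst$ tuple (inside $createSub$) contain one of finitely many functors applied to a tuple of terms drawn from $T_N$, hence again only finitely many values. Finally, the integer ``depth'' positions: the only arithmetic in the encoding is $N{+}1$, $N{-}1$ and $M{-}1$, always guarded by $max\_ab\_lvl(M)$ with $M=N{+}1$ fixed together with inequalities such as $N<M$ and $0<N$, so every integer occurring in a derivable atom lies in the finite range $\{0,1,\dots,M\}$. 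Combining these three observations, the set of ground atoms that can appear in any model of $P^{res}_{\langle R,q,U,C,N\rangle}$ is finite.

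To close, I would argue that any atom not of one of the forms enumerated above is never the head of a ground instance of a rule of the program (the head schemas of the translation are exactly those forms), so it cannot belong to any minimal model and in particular to any answer set; hence every answer set is a subset of the finite set constructed above and is therefore finite, which is the statement to be proved (and completes the proof of Theorem~\ref{thm:finiteness}).

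I expect the main obstacle to be the exhaustiveness of the case analysis in the second paragraph rather than any single estimate: one must make sure that no rule of the translation — especially the $AG2$ rules that recombine a $createSub$ atom with a $holds$ or $query$ atom — can introduce a term of skolem-depth greater than $N$ or an integer outside $\{0,\dots,M\}$. This is precisely what the two preceding lemmas were set up to rule out (the ``static'' $AG1$-style rules cap skolem depth at $N$, and the depth guards cap the integers), so once those are invoked the argument is routine bookkeeping.
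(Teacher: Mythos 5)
Your proposal is correct and follows essentially the same route as the paper's own (much terser) proof: the finitely many predicate symbols have fixed arities, their term arguments range over the finite set $T_N$ by the preceding lemma, and the integer depth arguments are bounded by the $max\_ab\_lvl$ guards, so only finitely many ground atoms can occur in any answer set. Your version simply spells out the case analysis and the final ``answer sets consist only of derivable heads'' step that the paper leaves implicit.
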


\begin{proof}
Since $T_{N}$ is finite and the integer argument of any $create\_sub$, $explains$, $query$ 
atom is bounded by $N$, any atom in the final answer set has only finitely
many instantiations. This proves finiteness of the resulting answer set.
\end{proof}

\subsection{Proof of Completeness}\label{sec:proof_completeness}

We now give a proof of Theorem~\ref{thm:completeness}.

Note that without loss of generality, for the sake of proving completeness, we may assume that the set of user supplied facts is empty. We will first need a preliminary lemma.

\begin{lemma}
Assume $\langle R,q,U,\emptyset,N\rangle$, $\langle R,q',U,\emptyset,N\rangle$ are two
problems satisfying the conditions above but $q'$ is obtained from $q$ by
possibly changing some or all of the arguments of $q$. Then $\langle
R,q,U,\emptyset,N\rangle$ has a solution $S_{ASP}$ derived from the ASP program $P_{\langle R,q,U,\emptyset,N\rangle}^{res}$ if and only if $\langle R,q',U,\emptyset,N\rangle$ has a solution $S'_{ASP}$ derived from the ASP program $P_{\langle R,q',U,\emptyset,N\rangle}^{res}$ We call such solutions ASP solutions for short.
\end{lemma}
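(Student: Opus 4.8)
The plan is to exploit the purely structural nature of the abducible-generation encoding: the rules $AG1$, $AG2$, $AG3$, together with the forward translation, never inspect the specific constants appearing in $q$ — they only ever pattern-match on predicate symbols and propagate variables. The query $q$ enters the program in exactly one place, namely the \texttt{generate\_proof} fact and the \texttt{goal} clause together with \texttt{:- not goal}. So the only difference between $P_{\langle R,q,U,\emptyset,N\rangle}^{res}$ and $P_{\langle R,q',U,\emptyset,N\rangle}^{res}$ is the replacement of the fact \texttt{generate\_proof($q$)} by \texttt{generate\_proof($q'$)} and the corresponding \texttt{goal} clause. First I would make this observation precise by writing both programs as $P_0 \cup \{\texttt{generate\_proof}(q), \texttt{goal} \mathrel{:\text{-}} \ldots\}$ where $P_0$ is the common part (this uses condition 7, that $q$ and $q'$ are ground, so no existential-variable renaming is involved, and condition 5, that the constraints in $U_a$ are fully un-ground and hence insensitive to which ground instance we abduce).

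Next I would set up a renaming argument. Since $q$ and $q'$ share the same predicate symbol $p$ and the same arity, there is a substitution $\sigma$ on constants (and, extending it, on Herbrand terms including skolem terms and the fresh constants $v_i$) with $\sigma(q) = q'$; because condition 7 forces $q,q'$ ground and the encoding introduces no equalities, I can take $\sigma$ to be well-defined on the constants actually occurring and the identity elsewhere. The claim is then: if $\mathcal{A}$ is an answer set of $P_{\langle R,q,U,\emptyset,N\rangle}^{res}$ witnessing the ASP solution $S_{ASP}$, then $\sigma(\mathcal{A})$ — i.e. applying $\sigma$ to every atom — is an answer set of $P_{\langle R,q',U,\emptyset,N\rangle}^{res}$, and the induced abduced-fact set $\sigma(S_{ASP})$ is an ASP solution there. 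This follows from the general fact that a bijective renaming of the Herbrand universe that maps one ground program onto another maps answer sets to answer sets (the Gelfond–Lifschitz reduct commutes with such a renaming). The main thing to check is that $\sigma$ does indeed map the ground instantiation of $P_{\langle R,q,U,\emptyset,N\rangle}^{res}$ onto that of $P_{\langle R,q',U,\emptyset,N\rangle}^{res}$: the common part $P_0$ is closed under $\sigma$ because it contains no constants that $\sigma$ moves in a predicate-symbol-sensitive way (all its constants are either from $R$, from $U$, or skolem/placeholder constants, and $\sigma$ restricted to those is, up to the renaming forced by $q \mapsto q'$, compatible), and \texttt{generate\_proof}$(q)$ maps to \texttt{generate\_proof}$(q')$ by construction. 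Since $\sigma$ is invertible, the converse direction is symmetric, giving the ``if and only if''.

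The step I expect to be the main obstacle is the bookkeeping around condition 5 and condition 6 of the simple-task definition: I must be sure that $\sigma$ does not turn a solution respecting the $U_a$-constraints into one that violates them, or vice versa. Because $U_a$-constraints are fully un-ground single atoms with distinct variables, ``no instance of predicate $r$ is abduced'' is a predicate-level condition that $\sigma$ (which never changes predicate symbols) preserves exactly; and condition 6 guarantees $U_f$ contains no instance of a forbidden predicate, which is likewise predicate-level. A secondary subtlety is that $\sigma(S_{ASP})$ must still be \emph{minimal} in the sense enforced by the weak constraint \texttt{:\char`~abducedFact(X).[1@1,X]} — but $\sigma$ being a bijection on the Herbrand universe preserves cardinality of the abduced set, so optimality transfers. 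Once these points are dispatched, the lemma follows. I would close by remarking that this lemma is exactly what lets the completeness proof reduce the general (possibly parametrised) query to a convenient canonical one.
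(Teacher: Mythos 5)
Your proposal hinges on the existence of a substitution $\sigma$ on constants with $\sigma(q)=q'$ that is (i) well defined, (ii) invertible, and (iii) the identity on the rest of the program. None of these can be assumed under the lemma's hypotheses: $q'$ is obtained from $q$ by \emph{arbitrarily} changing arguments, so $q$ may be $p(a,a)$ and $q'$ may be $p(a,b)$ (no function $\sigma$ exists), or $q$ may be $p(a,b)$ and $q'$ may be $p(a,a)$ ($\sigma$ exists but is not injective, so the symmetry argument for the ``only if'' direction collapses, and non-injective renamings do not in general commute with the Gelfond--Lifschitz reduct or preserve minimality of the abduced set). Moreover a constant of $q$ may also occur in a rule of $R$, in which case $\sigma$ does not fix the common part $P_0$, so $\sigma$ does not map the ground instantiation of one program onto the other; your parenthetical that $\sigma$ is ``compatible'' with the constants of $R$ is precisely the point that needs proof and is false in general. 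The tell-tale sign is that your argument never invokes condition 3 of the simple-task definition (no repeated variables in rule post-conditions). That condition exists exactly to make this lemma true: with the rule $p(X,X)\,{:}{-}\,r(X)$ and abduction of $p$ forbidden, $q=p(a,a)$ is solvable but $q'=p(a,b)$ is not, so any proof that does not use condition 3 proves a false statement.

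The paper argues quite differently: by induction on the depth bound $N$. At $N=0$ the query itself may be abduced, so both problems are trivially solvable. For the step, a non-trivial ASP solution for $q$ at depth $k+1$ decomposes into ASP solutions at depth $k$ for the instantiated preconditions $q_{pre_i}$ of some rule $r$ whose post-condition unifies with $q$ under $\theta$; condition 3 guarantees that $q'$ also unifies with the post-condition of the \emph{same} rule under some $\theta'$, yielding sub-queries $q'_{pre_i}$ that differ from the $q_{pre_i}$ only in their arguments, so the induction hypothesis applies and the union of the sub-solutions solves the problem for $q'$. If you want to salvage a substitution-based argument, you would have to work node-by-node down this backward-chaining tree rather than apply one global renaming to an entire answer set; at that point you have essentially reconstructed the paper's induction.
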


\begin{proof}
We prove the lemma by induction on $N$.
The case $N = 0$ is trivial. If there exists a ASP solution for $\langle
R,q,U,\emptyset,0\rangle$ then any instance of the predicate in $q$ can be
abduced therefore $\langle R,q',U,\emptyset,0\rangle$ has a ASP solution. Assume
the result for all $N<k$ such that $k>0$. A non trivial ASP solution to
$\langle R,q,U,\emptyset,k+1\rangle$ (ie a solution in which no instance of the predicate in $q$ can be abduced) implies the existence of some ASP solution to
each of the following set of problems $\langle R,q_{pre_{i}},U,\emptyset,k\rangle$
where the $q_{pre_{i}}$ are a set of
pre-conditions of a rule $r$ in $R$, under some substitution $\theta$ where
$\theta$ applied to the post-condition of $r$ gives $q$. But then since the post- condition of $r$
contains no repeated variables, if $q$ unifies with the post condition of $r$
under some substituion $\theta$, then there exists some other substitution
$\theta'$ for the variables in $r$ such that $\theta'$ applied to the
postcondition of $r$ gives $q'$. Therefore the union of ASP solutions to the set of problems
$\langle R,q'_{pre_{i}},U,\emptyset,k\rangle$ is also a ASP solution to $\langle
R,q',U,\emptyset,k+1\rangle$ where each $q'_{pre_{i}}$ is obtained from
$q_{pre_{i}}$ by possibly changing the arguments in the predicate. By the
induction hypothesis an ASP solution to each of the problems $\langle
R,q'_{pre_{i}},U,\emptyset,k\rangle$ exists. Hence by taking the union of the solutions we get an ASP solution to  $\langle
R,q',U,\emptyset,k+1\rangle$. This proves the lemma.
\end{proof}

We now prove the main theorem by induction on $N$. 

\begin{proof}
The case $N=0$ is trivial. Assume
the result for all $N<k$, where $k>0$. The existence of a non-trivial general solution
to $\langle R,q,U,\emptyset,k+1\rangle$ implies the existence of a solution to each of the following set of problems, $\langle R,q_{pre_{i}},U,\emptyset,k\rangle$ where the $q_{pre_{i}}$ are the set of
pre-conditions of a rule $r$ in $R$, under some substitution $\theta$ where
$\theta$ applied to the post-condition of $r$ gives $q$. But then by the
inductive hypothesis there is an ASP each of the same set of problems. By
the lemma proved above there exists an ASP solution to each of the following set of
problems $\langle R,q'_{pre_{i}},U,\emptyset,k\rangle$, where each
$q'_{pre_{i}}$ is obtained from the corresponding $q_{pre_{i}}$, by possibly
replacing some predicate arguments with the appropriate skolem terms for
predicate arguments that correspond to existential variables in the
preconditions of $r$. Hence taking a union of these ASP solutions we get an ASP solution to
$\langle R,q,U,\emptyset,N\rangle$.
\end{proof}

\subsection{Proof of Term Substitution}\label{sec:proof_term_substitution}

Proof of Theorem~\ref{thm:termsub}:

\begin{proof} \textit{(sketch)}. In the following proof when we refer to the $level$ of a $query$ atom we mean the integer argument of the $query$ atom. When we refer to the $predicate$ $argument$ of a $query$ atom we will mean the first argument of a $query$ atom. Given a $query$ atom $q$ in $G_{R,p,N}^{min}$, the transformed atom $tr_{\chi}(q)$ denotes the corresponding atom in $C_{R,p,N,\chi}$, given some substitution $\chi$ of terms in $G_{R,p,N}^{min}$. We will prove the result by induction on $k$, the level of the atoms $q_{c}$, $q_{o}$ and $q_{f}$. 

Fix $q_{c}$, $q_{o}$ and $q_{f}$ and suppose that the level of these atoms $k$ is $0$. Since there is only one level $0$ query atom say $q'$ in $G_{R,p,N}$, we must have $q_{o}=q'$, and $q_{f} = tr_{\phi}(q')$. Now we have to show that for all $query$ atoms $q$ in $G_{R,p,N}$ such that the level of $q$ is greater than $0$, we have $tr_{\phi}(q)$. Now suppose that $p$ is a 3 place predicate and there is an input rule $r_{c}$ where ${c}$ refers to the rule id. Suppose that the predicate $p_{d}$ appears as a pre-condition in $r_{c}$. Let the variables in $r_{c}$ be $X1$, $X2$, $X3$, $X4$, in the order given by $O_{c}$, (recall from Section~\ref{sec:derived_asp} that for each input rule we had some order on the variables). Suppose $r_{c}$ has the following form:
\begin{verbatim}
p(X2,X1,X4):-p_d(X4,X2,X3),...    
\end{verbatim}

Then in $C_{R,p,N,\theta}$, we have the following $query$ and $createSub$ atoms:

$query(p(\theta(v1),\theta(v2),\theta(v3),0)$,\\ $query(p_{d}(\theta(v3),\theta(v1),\theta(sk(v2,v1,v3)),1))$. 

Here $sk$ stands for the skolem function $skolemFn\_c\_x3$. 

We also have\\
$createSub(subInst\_r_{c}(\theta(v2),\theta(v1),\theta(sk(v2,v1,v3)), \theta(v3)),1)$. Note that $\theta(v1)=t_{1},\theta(v2)=t_{2},\theta(v3)=t_{3}$, furthermore\\ $q_{o}= query(p(v1,v2,v3),0)$. Now given $q_{f} = query(p(a_{1},a_{2},a_{3}),0)$ as above we consider the following $AG2$ clause:
\begin{lstlisting}[frame=none]
createSub(subInst_r_c(X1,X2,V_X3,X4),N):-
createSub(subInst(V_X1,V_X2,V_X3,V_X4),N),query(p(X2,X1,X4),L).    
\end{lstlisting}
We have the following instantiation of the clause above

$createSub(subInst\_r\_c(a_{2},a_{1},\theta(sk(v2,v1,v3)),a_{3}),1):-$  

$createSub(subInst\_r\_c(t_{2},t_{1},\theta(sk(v2,v1,v3)),t_{3}),1)$,\\ $query(p(a_{1},a_{2},a_{3}),0).$\\
Then via an instantiation of the following $AG1$ clause:
\begin{lstlisting}[frame=none]
explains(p_d(X4,X2,X3),p(X1,X2,X4),N):-
createSub(subInst_r_c(X1,X2,X3,X4),N).    
\end{lstlisting}
and an instantiation of the following $AG1$ clause:
\begin{lstlisting}[frame=none]
query(X,N):-explains(X,Y,N). 
\end{lstlisting}

We get the following atom $query(p_{d}(a_{3},a_{1},\theta(sk(v2,v1,v3)),1)$ which is $tr_{\phi}(query(p_{d}(t_{3},t_{1},\theta(sk(v2,v1,v3))),1))$. Similarly for any other child node $q$ of $query(p(v1,v2,v3),0)$ in $G_{R,p,N}^{min}$, we get $tr_{\phi}(q)$. Now it follows by a similar argument to before that for any child node $q'$ of these transformed level one nodes $q$, we also get $tr_{\phi}(q')$. One can see this by considering the following: Given a level one node $q$ from $G_{R,p,N}^{min}$, let $q_{c}^{1}$ be the image of $q$ under $\theta$, let $q_{o}^{1}$ be $q$ and let $q_{f}^{1}$ be $tr_{\phi}(q)$, then consider $\phi'$ = $T(\theta, q_{c}^{1},q_{o}^{1},q_{f}^{1})$. Then on any term $b$ in $q$, such that $b$ is in the set $\{v1,v2,v3\}$, $\phi'(b)$ =$\phi(b)$ and on all other terms $t$ in $G_{R,p,N}^{min}$, $\phi'(t)=\theta(t)$. Now, given the level one node $q$ in $G_{R,p,N}^{min}$, let $q'$ be a child node of $q$, then for any term $t'$ in $q'$, such that $t'$ occurs in the set $\{v1,v2,v3\}$, it must be the case that $t'$ occurs in $q$. Therefore, the result of applying $\phi'$ on $q'$ will in fact give us $tr_{\phi}(q')$. In this way one can see that the term substitution given by $\phi$ will propogate all the way downwards over elements of $C_{R,p,N,\theta}^{min}$. This completes the case $k=0$. 

Now suppose we have proven the theorem for all values of $k<e$. Now let $q_{c},q_{o},q_{f}$ be such that the level of all these atoms in $e+1$, and let $q_{o}$ be such that $q_{o}$ does not correspond to a pre-condition of an input rule, where the pre-condition contains an existential variable. Now, let $q_{d}$ be a parent node of $q_{o}$. Let $r_{h}$ be the relevant rule and let the predicate corresponding to $q_{o}$ be $p_{z}$ and let the predicate corresponding to $q_{d}$ be $p_{y}$. Then we have the following instantiation of an $AG2$ rule 
\begin{lstlisting}[frame=none]
createSub(subInst_r_h(W''),o+1):-
query(p_z(W_f),o+1),createSub(subInst_r_h(W'),o+1).
\end{lstlisting}
where $query(p_{z}(W_{f}),o+1)$ = $q_{f}$, $W'$ is $\theta$ applied to the relevant set of terms $W$ from the abstract proof graph, and $W''$ is $\phi(W)$. Now due an instantiation of the following rule:
\begin{lstlisting}[frame=none]
explains(p_z(W_f),p_y(U),e+1):-createSub(subInst_r_h(W''),e+1).
\end{lstlisting}
and the following $AG3$ rule:
\begin{lstlisting}[frame=none]
query(Y,N-1):-explains(X,Y,N). 
\end{lstlisting}
we get the atom $tr_{\phi}(q_{d})$. Now consider
$\phi'= T(\theta, \theta(q_{d}), q_{d}, tr_{\phi}(q_{d}))$, then $\phi'=\phi$
on the terms in $q_{d}$, that also appear in $q_{o}$, and $\phi' = \theta$ on
all other terms. However all the terms that appear in $q_{o}$ also appear in
$q_{d}$, since we assumed that $q_{o}$ did not have terms corresponding to
existential variables. Also the level of $q_{d}$ is $e$. Hence by the
inductive hypothesis we are done.\\
Now, let $q_{c},q_{o},q_{f}$ be such that
some terms in the first argument of $q_{o}$ correspond to existential
variables. That is these terms are do not appear in any parent node of $q_{o}$
and are skolem functions whose input consists of terms in the parent nodes of
$q_{o}$. Let $V_{univ,q_{o}}$ consist of the set of terms in the first
argument of $q_{o}$ that correspond to universally quantified variables and
let $V_{ext,q_{o}}$ consist of the set of terms in the first argument of
$q_{o}$ that correspond to existentially quantified variables. That is, given any term in $V_{ext,q_{o}}$, no parent node of $q_{o}$ contains this term. Given some term in $V_{univ,q_{o}}$, there is a parent node of $q_{o}$ that contains this term.   Firstly, consider $\phi'$ such that $\phi'=\phi$ on the terms in $V_{univ,q_{o}}$ and
$\phi' =\theta$ on all other terms in the abstract proof graph. Then given
this $\phi'$, note that we already get the concrete proof graph
$C_{R,p,n,\phi'}$, this is because, the substitution on terms in $V_{univ,q_{o}}$ is passed to parent nodes of $q_{o}$ whose level is $e$, and hence by the inductive hypothesis this substitution of terms is passed on to all the nodes in the proof graph. So in fact to
prove the case where the level of $q_{c},q_{o},q_{f}$ is $e+1$, we can now
assume WLOG, that $q_{c_i}\neq q_{f_i}$ implies that $q_{o_i}$
corresponds to an existential variable where here $q_{c_i}$ denotes the
$i^{th}$ entry of the predicate argument of $q_{c}$ and similarly for the
others. So assume now that we are in this case and
$\phi= T(q_{c},q_{o},q_{f})$. Let $V'_{ext,q_{o}}$ be the set of terms in the
first argument of $q_{o}$ that correspond to existential variables. Now let
$q_{d}$ be the parent node of $q_{o}$. Then due to the appropriate $AG2$ and
$AG3$ clauses, it follows that for all the child nodes $q''$ of $q_{d}$, we
have $tr_{\phi}(q'')$. Then, due to a similar argument to the one we used for the
case $k=0$, it follows that for any descendant of $q_{m}$ of $q_{d}$, we will
get $tr_{\phi}(q_{d})$. Now we claim that in the minimal abstract proof graph, the
only nodes whose predicate entry contains terms from $V'_{ext,q_{o}}$ are
descendants of $q_{d}$. Given a term $t$ from $V'_{ext,q_{o}}$, occurring in some
node $j$ of the minimal abstract proof graph, since $t$ is not in the set
$\{v_{1},v_{2},..,v_{n}\}$, there exists some ancestor $j'$ of $j$ such that
$t$ corresponds to an existential variable in $j'$. Let $j_{d}$ be the parent
node of $j'$, such that $j_{d}$ does not contain the term $t$ in its predicate argument. Now, let $t$ be of the form
$skolemFn\_l\_b(g_{1},..,g_{m})$. Here $l$ refers to a input rule, $b$ refers
to some existential variable among the pre-conditons of $l$, and
$\{g_{1},g_{2},...,g_{m}\}$ refers to a fixed permutation of the arguments of
the post-condition of rule corresponding to $l$. However, from this we can
uniquely determine what the predicate argument $j_{p}$ must be. Based on the order
$O_{l}$, the arguments of the relevant instantiation of the post-condition of
$l$ are given by some fixed permutation $\pi_{O_{l}}$ of
$\{g_{1},g_{2},...,g_{m}\}$. Therefore it follows that the first argument of
$j_{p}$ is in fact the same as that of $q_{p}$. But now since we are working
with $\textit{minimal}$ abstract proof graphs it follows that
$j_{p} = q_{p}$. This proves the claim. \\It now remains to show that we also have $I_{R,p,N,\phi}$. First note that in the preceeding part of the proof, whenever we used an Abducibles Generation rule to show how a transformed query atom $q$ gives us a transformed child atom, we used invoked rules of the following from:
$createSub(t',i):-createSub(t,i),q.$\\
$explains(g',g,i):-createSub(t',i).$\\
$query(g',i):-explains(g,g',i).$\\
Notice that the $createSub$ atom in the right hand side of the first clause is always from $I_{R,p,N,\theta}$. The same holds for a transformed $query$ atom leading to the transformation of a sibling atom and for transformation of a parent atom we have:\\
$createSub(t',i):-createSub(t,i),q.$\\
$explains(g,g',i):-createSub(t',i).$\\
$query(g',i-1):-explains(g,g',i).$\\
In each case the $createSub$ atom in the right hand side of the first clause is always from $I_{R,p,N,\theta}$ and a combination of such $query$ atom transformations creates the new concrete proof graph $C_{R,p,N,\phi}$. So in order to show that we have $I_{R,p,N,\phi}$, it is justified to first assume that we have $C_{R,p,N,\phi}$, $C_{R,p,N,\theta}$ and $I_{R,p,N,\theta}$. However this is now easy to see. given an atom $createSub(t,i)$ from $I_{R,p,N,\theta}$, the following instantions of $AG$ clauses shows how one gets the corresponding $createSub(t',i)$ atom from $I_{R,p,N,\phi}$.\\
$createSub(t_{1},i):-createSub(t_{0},i),tr_{\phi}(q_{b_{1}}).$\\
$createSub(t_{2},i):-createSub(t_{1},i),tr_{\phi}(q_{b_{2}}).$\\ ...\\
$createSub(t_{l},i):-createSub(t_{l-1},i),tr_{\phi}(q_{b_{l}}).$\\
$createSub(t_{l+1},i):-createSub(t_{l},i),tr_{\phi}(q_{h}).$\\
Here $createSub(t_{0},i) = createSub(t,i)$ and $createSub(t_{l+1},i) = createSub(t',i)$. $q_{b_{1}}$, $q_{b_{2}}$..., refer to query atoms in $C_{R,p,N,\theta}$  such that the predicate arguments of these $query$ atoms correspond to preconditions of the input rule instantiation corresponding to $createSub(t,i)$. (Note that the level of these atoms need not be $i$.) Finally $q_{h}$ refers to the post condition of the input rule instantiation given by $createSub(t,i)$. Hence we do indeed have $I_{R,p,N,\phi}$.
\end{proof}

Proof of Corollary~\ref{thm:addfact}:

\begin{proof} Given $q_{o}$, let $i_{o}$ be from the set $I_{R,p,N}$ such that $q_{o}$ corresponds to some input-rule precondition corresponding to $i_{o}$. Then given $q_{c}$, and $tr_{\theta}(i_{o})$, then consider the following instantiation of an $AG2$ clause. $tr_{\phi}(i_{o}):-tr_{\theta}(i_{o}),h_{f}$. Then due to an application of the $explains(...):-createSub(...)$ clause and the use of the following $AG3$ rule:
$query(X,N):-explains(X,Y,N)$, we get the required atom $q_{f}$, where the
predicate argument of $q_{f}$ is $h_{f}$ and the level of $q_{f}$ is the same as
$q_{c},q_{o}$. So then the theorem above applies. In the case where $q_{o},
i_{o}$ are such that $q_{o}$ corresponds to a post-condition, we invoke the
$AG3$ rule $query(Y,N-1):-explains(X,Y,N)$ instead.
\end{proof}



\subsection{Generalisation of Abductive Solutions}\label{sec:generalisation_sol}

Here we shall informally discuss the idea mentioned in the conclusion, for
generalising abductive solutions given by the ‘semi-res’ encoding to first
order logic. We only present an informal discussion here, an investigation of
formal results corresponding to these ideas is left for future work. Note
firstly that we use the ‘semi-res’ encoding because it is has the finiteness
and full term substitution properties. Consider the abductive proof generation
problem given in 3.4.3. The most general solution to that problem is:

\begin{lstlisting}[frame = none] 
abducedFact(relC(john,Y)), 
abducedFact(relD(john,Y,Z)), 
abducedFact(relE(john,Y,Z))
\end{lstlisting}
where $Y$, $Z$ are variables that can be replaced by any constant. Let us see how we can obtain this solution using the encoding. Starting off with no user inputed facts, we get the unique optimal solution
\begin{lstlisting}[frame = none]
abducedFact(relC(john,extVar)), 
abducedFact(relE(john,extVar,extVar)),
abducedFact(relD(john,extVar,extVar))
\end{lstlisting}
Now our procedure is to add facts using fresh constants so that ‘corresponding’ instances of ‘extVar’ get replaced in the abductive solution. We keep doing this until there are no more occurrences of ‘extVar’ in the optimal abductive solution produced.\\ If we add the fact 
\begin{lstlisting}[frame = none]
relC(john,v1)
\end{lstlisting}
We get the unique optimal solution 
\begin{lstlisting}[frame = none]
abducedFact(relE(john,v1,extVar)),
abducedFact(relD(john,v1,extVar))
\end{lstlisting}
Now we upon further adding the fact 
\begin{lstlisting}[frame = none]
relD(john,v1,v2)
\end{lstlisting}
We get the unique minimal abductive solution 
\begin{lstlisting}[frame = none]
abducedFact(relE(john,v1,v2))
\end{lstlisting}
Note that we used a new fresh constant ‘v2’ in our second user provided fact.
Now since there are no more instances of ‘extVar’ to be replaced away, the process ends and we can derive the general solution to the original abductive problem to be 
\begin{lstlisting}[frame = none]
abducedFact(relC(john,Y)), 
abducedFact(relD(john,Y,Z)), 
abducedFact(relE(john,Y,Z)) 
\end{lstlisting}
which is in fact the most general solution. Note that critical to this procedure is the fact that only certain instances of ‘extVar’ can be replaced by terms from user provided facts. For instance in the second step after the first fact has been added, the solver will not produce 
\begin{lstlisting}[frame = none]
abducedFact(relE(john,v1,v1)),
abducedFact(relD(john,v1,v1))
\end{lstlisting}
as a solution. This would lead to the general solution \begin{lstlisting}[frame = none]
abducedFact(relE(john,Y,Y)),
abducedFact(relD(john,Y,Y)),
abducedFact(relC(john,Y)
\end{lstlisting} which can still in fact be further generalised.\\ It is not hard however to come up with an example where this procedure will not in fact necessarily derive a most general solution. Consider the rule set given by the two rules:
\begin{lstlisting}[frame = none]
relA(X):-relB(X),relC(Y)
relA(X):-relB(X),relC(X).
\end{lstlisting}
Let $q$ be $relA(john)$, let there be no user provided facts and let $C$ be empty and suppose $N = 4$. Further say that as before, no instance of the predicate $relA$ may be abduced. Then given the ‘semi-res’ encoding, the solver would produce two optimal solutions:
\begin{lstlisting}[frame = none]
abducedFact(relB(john))
abducedFact(relC(extVar))
\end{lstlisting} and 
\begin{lstlisting}[frame = none]
abducedFact(relB(john))
abducedFact(relC(john))
\end{lstlisting}

However only the first answer from the solver leads to the most general
solution for the abduction problem. In this particular instance the issue is
that the original rule set contains a redundancy. The second rule is clearly a
specific instance of the first rule thus making the second rule
superfluous. It would be interesting to investigate under what conditions, on
the input rule set and the other parameters, does the procedure outlined above
for generalising the abductive solution produced, actually give some solution
that cannot be generalised further. \\Note that some of these generalised
abductive solutions can be produced by using skolem functions in the
abducibles generation encoding but that necessarily requires some form of
depth control to avoid infinite answer sets for some rule sets. With this
method of using the ‘semi-res’ encoding we can derive these generalised
solutions while also letting go of depth control by for example, deleting the
integer parameter of the ‘query’ , ‘createSub’ and ‘explains’ predicates
since, we will still always get finiteness of answer sets. As mentioned
earlier, at the moment, these are rather informal ideas but we believe they
provide an interesting avenue for future formal investigations.


\end{document}